\newcommand{\N}{\mathcal{N}\xspace}
\newcommand{\M}{\mathcal{M}\xspace}
\newcommand{\R}{\mathbb{R}\xspace}
\newcommand{\E}{\mathbb{E}\xspace}
\DeclareMathOperator{\Cov}{Cov}
\theoremstyle{definition}
\newtheorem{definition}{Definition}
\begin{document}

\title{Necessary and Sufficient Conditions for Surrogate Functions of Pareto
  Frontiers and Their Synthesis Using Gaussian Processes}
\author{Conrado S. Miranda,
Fernando J. Von Zuben,~\IEEEmembership{Senior Member,~IEEE}
\thanks{C. S. Miranda and F. J. Von Zuben are with the School of Electrical and
  Computer Engineering, University of Campinas (Unicamp), Brazil. E-mail:
contact@conradomiranda.com, vonzuben@dca.fee.unicamp.br}}

\maketitle

\begin{abstract}
  This paper introduces the necessary and sufficient conditions that surrogate
  functions must satisfy to properly define frontiers of non-dominated solutions
  in multi-objective optimization problems. These new conditions work directly
  on the objective space, thus being agnostic about how the solutions are
  evaluated. Therefore, real objectives or user-designed objectives' surrogates
  are allowed, opening the possibility of linking independent objective
  surrogates. To illustrate the practical consequences of adopting the proposed
  conditions, we use Gaussian processes as surrogates endowed with monotonicity
  soft constraints and with an adjustable degree of flexibility, and compare
  them to regular Gaussian processes and to a frontier surrogate method in the
  literature that is the closest to the method proposed in this paper. Results
  show that the necessary and sufficient conditions proposed here are finely
  managed by the constrained Gaussian process, guiding to high-quality
  surrogates capable of suitably synthesizing an approximation to the Pareto
  frontier in challenging instances of multi-objective optimization, while an
  existing approach that does not take the theory proposed in consideration
  defines surrogates which greatly violate the conditions to describe a valid
  frontier.
\end{abstract}

\begin{IEEEkeywords}
  Gaussian processes;
  Necessary and sufficient conditions;
  Non-dominated frontier;
  Surrogate functions.
\end{IEEEkeywords}

\IEEEpeerreviewmaketitle

\section{Introduction}
\IEEEPARstart{M}{ulti-objective} optimization (MOO), also called multiple
criteria optimization~\cite{gandibleux2006multiple}, is an extension of the
standard single-objective optimization, where the objectives may be
conflicting with each other~\cite{miettinen1999nonlinear,deb2001multi}. When a
conflict exists, we are no more looking for a single optimal solution but for a
set of solutions, each one providing a trade-off on the objectives and none
being better than the others. This solution set is called the Pareto set and
its counterpart in the objective space is denoted the Pareto frontier.

The Pareto frontier is at the core of MOO algorithms, being the foundation of
many methods devoted to evaluating the performance and comparing the solutions
to each other~\cite{zitzler2003performance}. However, the frontier is defined by
the objectives, which can be expensive to
compute~\cite{jin2005comprehensive,knowles2008meta,voutchkov2010multi}. This
leads to a variety of surrogate methods that try to approximate the objectives,
e.g. \cite{6466380,6600837}, thus saving computational resources.

Among the surrogates that directly or indirectly estimate the Pareto frontier,
one introduced by Yun et al.~\cite{moo_svm} is the closest to the surrogate
described in this paper. They used a one-class SVM to define a function over the
objective space whose null space describes an approximation of the Pareto
frontier. This function is used to select individuals, since its value increases
as its argument becomes more distant from the frontier, which are then used for
crossover in a genetic algorithm.

Loshchilov et al.~\cite{loshchilov2010mono} presented a similar SVM approach,
but the function learnt is defined over the decision space, which allows direct
comparison with the Pareto frontier approximation without requiring evaluation
of the objectives. This direct comparison can also be achieved with estimates
built over the objective space by integrating surrogates for the objectives.
However, contrary to the one-class SVM that learns a model to fit all samples on
one side of the approximate frontier, the proposed SVM is also able to consider
points that dominate the frontier being approximated, allowing approximation of
multiple Pareto frontiers, each defined by a class of points in non-dominated
sorting~\cite{6766752}.

In a different approach, Loshchilov et al.~\cite{loshchilov2010dominance}
approximated the Pareto dominance instead of the Pareto frontier by using a
rank-based SVM. In this case, instead of providing only the data points, the
algorithm is also informed about the preference for an arbitrary number of
sample pairs and tries to find a function where higher evaluation represents
higher preference. Using the Pareto dominance to establish the preference
between points and learning directly from the decision space, candidate
solutions can be compared in dominance using the learnt function. However, both
\cite{loshchilov2010mono} and \cite{loshchilov2010dominance} try to estimate the
Pareto frontier using generic function approximation models, which do not take
into account the particularities of the Pareto frontier.

It is possible to guarantee that the Pareto frontier's estimate is valid by
building conservative estimates. For instance,
using a binary random field over the objective space to model the boundary
between dominated and non-dominated regions, Da Fonseca and
Fonseca~\cite{da2010attainment} described a theory that can be used to assess
the statistical performance of a stochastic optimization algorithm and compare
different algorithms. The attainment function described in the paper defines the
probability that a run of the stochastic algorithm will dominate the function's
arguments. Although the attainment function is hard to compute, it can be
approximated by multiple runs of the underlying algorithm, which makes it a good
candidate for analyzing the performance statistics of the optimization algorithm
and for performing hypothesis testing between MOO algorithms.

If a single run is considered, then the approximate attainment function
describes a valid estimate of the Pareto frontier and it is defined as the
border of the region dominated by the points provided. Although valid, this
estimate is very conservative and does not interpolate between the points
provided, which means it cannot provide a good idea of the frontier's shape and
any evaluation of new points could be performed using only dominance comparison
with the provided points.

In this paper, we develop a theory that defines necessary and sufficient
conditions for a functional description of a Pareto frontier. Based on this
theory, the search for approximations for the Pareto frontier using surrogate
functions should be constrained to, or at least focused on, the ones that
satisfy the results. If not, the resulting manifold obtained from the function
may have any shape, possibly with many dominated points, which could result in
reduced performance.

Moreover, the theory is developed on the objective space, allowing either
accurate or approximate objective evaluations to be used, without restricting
the format of the objectives' surrogates. If parametric surrogate objectives are
used, their association with the Pareto frontier surrogate can provide feedback
on how to adjust their parameters so that the approximation is closer to the
real objectives.

As an example of how to integrate the theoretical conditions in a surrogate
design, we show how to introduce the theoretical conditions as soft constraints
in Gaussian processes~\cite{rasmussen2006gaussian}, which are nonparametric
models, thus being able to adjust to variable number of samples, and whose
hyper-parameters can be easily optimized.

To validate the hypothesis that surrogate methods that do not consider this
theory may define invalid Pareto frontier approximations, the constrained
Gaussian process is compared to a regular Gaussian process and to an existing
SVM-based surrogate~\cite{moo_svm} and results show that the soft constrained
Gaussian process finds good approximations maximally obeying the constraints
according to the degree of flexibility of the model. On the other hand, the
models that do not take into account the theory can violate greatly and
arbitrarily the conditions for a valid Pareto frontier.

This paper is organized as follows. Section~\ref{sec:moo} introduces the
notation and principles of multi-objective optimization used in this paper.
Section~\ref{sec:conditions} shows the conditions that a function must satisfy
to define a Pareto frontier. These conditions are then used in
Section~\ref{sec:approximations} to build a function to approximate a frontier
given some points on it and the approximation is compared to an existing
surrogate. Finally, Section~\ref{sec:conclusion} summarizes the findings and
points out future directions for research.

\section{Multi-Objective Optimization}
\label{sec:moo}
A multi-objective optimization (MOO) problem is defined by a decision space
$\mathcal X$ and a set of objective functions $g_i(x) \colon \mathcal X \to
\mathcal Y_i, i \in \{1,\ldots,M\}$, where $\mathcal Y_i \subseteq
\R$~\cite{deb2014multi}. Since the framework is the same for maximization or
minimization, we will consider that minimization is desired in all objectives.
For a given point $x$ in the decision space, the point defined by its evaluation
using the objectives $y = (g_1(x), \ldots, g_M(x))$ is its counterpart in the
objective space $\mathcal Y = \mathcal Y_1 \times \cdots \times \mathcal Y_M$.

Although the objective space usually only makes sense when coupled with the
decision space and objectives, which allows for its infeasible region and Pareto
frontier to be defined, we will work only with the objective space in this
paper, which means that the results hold for any problem. We will also consider
that $\mathcal Y = \R^M$, since any restriction for a specific problem is
defined by means of the objectives and decision space constraints, and are
handled transparently.

Furthermore, we assume that the optimal solutions describe a set of $M-1$
manifolds on $\mathbb R^M$, which correspond to curves in the 2D case and
surfaces in the 3D case. Most multi-objective optimization problems have
solutions with this property, with noticeable exceptions, such as: \textit{i})
problems where some of the objectives do not conflict, so that only one of them
should be used in the MOO problem with the other conflicting objectives, while
the optimality of the ignored objectives is guaranteed because they were
redundant; and \textit{ii}) some problems with less decision variables $D$ than
objectives $M$, such as the Viennet function~\cite{coello2002evolutionary}.

Since we are dealing with an optimization problem, we must define operators to
compare solutions, like the operators $<$ and $\le$ are used in the
mono-objective case. In MOO, this operator is the dominance.

\begin{definition}[Dominance]
  \label{def:dominance}
  Let $y$ and $y'$ be points in $\R^M$, the objective space. Then $y$ dominates
  $y'$, denoted $y \preceq y'$, if $y_i \le y'_i$ for all $i$.
\end{definition}

The definition of dominance used in this paper is the same provided
in~\cite{zitzler2003performance}, which allows a point to dominate itself. This
relation is usually called weak dominance, but we call it ``dominance'' for
simplicity, since it is the main dominance relation used in this paper. Another
common definition is to require that $y_i < y'_i$ for at least one $i$, and both
definitions are consistent with the theory developed in this paper.

\begin{definition}[Strong Dominance]
  \label{def:strong_dominance}
  Let $y$ and $y'$ be points in $\R^M$, the objective space. Then $y$ strongly
  dominates $y'$, denoted $y \prec y'$, if $y_i < y'_i$ for all $i$.
\end{definition}

Once defined the comparison operator, we can divide the space $\mathcal Y$ in
three sets: an estimated Pareto frontier, the set of points strongly
dominated by the estimated frontier, and the set of points not strongly
dominated by the estimated frontier.

\begin{definition}[Estimated Pareto Frontier]
  \label{def:frontier}
  A path-connected set of points $F \subset \R^M$ is said to be an estimated
  Pareto frontier if no point in it strongly dominates another point also in
  $F$, that is, $\forall y \in F, \nexists y' \in F \colon y' \prec y$, and
  every point in the objective space except for $F$ either strongly dominates or
  is strongly dominated by a point in $F$, that is, $\forall y \in \R^M - F,
  \exists y' \in F \colon y \prec y' \vee y' \prec y$.
\end{definition}

A set $S$ is path-connected if there is a path joining any two points $x$ and
$y$ in $S$ and a path is defined by a continuous function $p \colon [0,1] \to S$
with $p(0) = x$ and $p(1) = y$. Therefore, if there is a continuous path of
points in $S$ that gets from any $x \in S$ to $y \in S$, then $S$ is
path-connected. Based on this definition, an estimated Pareto frontier $F$
divides the objective space $\R^M$ in three disjoint sets: points strongly
dominated by points in $F$, points that strongly dominates points in $F$, and
$F$ itself.

\begin{definition}[Estimated Strict Pareto Frontier]
  A set of points $F_s \subset \R^M$ is said to be an estimated strict Pareto
  frontier if no point in it dominates another point also in $F_s$, that is,
  $\forall y \in F_s, \nexists y' \in F_s, y' \ne y \colon y' \preceq y$, and
  every point in the objective space except for $F_s$ either dominates or is
  dominated by a point in $F_s$, that is, $\forall y \in \R^M - F_s, \exists y'
  \in F_s \colon y \preceq y' \vee y' \preceq y$.
\end{definition}

\begin{definition}[True Pareto Frontier]
  An estimated strict Pareto frontier $F^*$ is a true Pareto frontier if and
  only if, for all points in $F^*$, there is no other feasible point in the
  objective space that dominates the point in the frontier, that is, $\forall y
  \in F^*, \nexists x \in \mathcal X, g(x) \ne y \colon g(x) \preceq y$.
  Moreover, for a given problem, the true Pareto frontier is unique.
\end{definition}

The estimated Pareto frontier of Definition~\ref{def:frontier} is a
generalization and an approximation of the true Pareto frontier in two ways:
\textit{i}) if the true Pareto frontier is discontinuous, then dominated points
are added so that the estimated Pareto frontier $F$ is path-connected while also
guaranteeing that no point in it strongly dominates any other; and \textit{ii})
the estimated Pareto frontier is simply a set of points that divide the space
into dominated and non-dominated regions, without stating anything about the
optimality of its points.

\begin{figure}[tb]
  \centering
  \psfrag{y1}[c][c]{$y_1$}
  \psfrag{y2}[c][c]{$y_2$}
  \psfrag{D11}[l][l]{\tiny $\overline D$}
  \psfrag{D22}[l][l]{\tiny $D$}
  \psfrag{F11}[l][l]{\tiny $F_s$}
  \psfrag{F22}[l][l]{\tiny $F$}
  \includegraphics[width=0.9\linewidth]{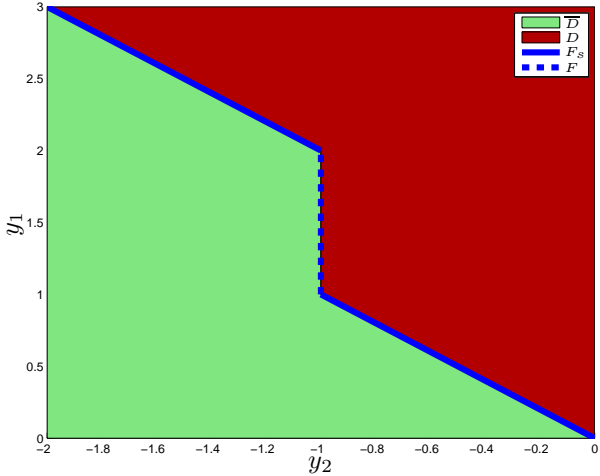}
  \caption{Example of the definitions for a particular multi-objective problem.
    The estimated strict Pareto frontier $F_s$ is shown in a solid blue line,
    the estimated Pareto frontier $F$ includes the solid and dashed blue lines,
    the dominated region $D$ is shown on the top right red area, and the
    non-dominated region $\overline D$ is shown on the bottom left green area.}
  \label{fig:pareto_frontier_example}
\end{figure}

Consider, for instance, a problem where one of the objectives is given by
\begin{equation*}
  g_1(x) =
  \begin{dcases}
    x+1, & x > 1
    \\
    x, & \text{otherwise},
  \end{dcases}
\end{equation*}
and the other is given by $g_2(x) = -x$. Then the true Pareto frontier $F^*$ is
given by
\begin{equation*}
  \begin{aligned}
    F^* =& \{(x+1, -x) \mid x \in \R, x > 1\}
    \\
    &\quad \cup \{(x, -x) \mid x \in \R, x \le 1\},
  \end{aligned}
\end{equation*}
which clearly is not path-connected. However, if we add the set of points $\hat
F = \{(y,-1) \mid y \in (1,2]\}$ to $F^*$, then the resulting path-connected set
$F = F^* \cup \hat F$ satisfies Definition~\ref{def:frontier}, despite the fact
that every point in $\hat F$ is dominated by $(1,1) \in F^*$, but not strongly
dominated by it.

Figure~\ref{fig:pareto_frontier_example} shows an estimated strict Pareto
frontier $F_s$, which coincides with the true Pareto frontier $F^*$ in this
example, and the path-connected estimated Pareto frontier $F$ for this problem.
This makes it clear that the estimated Pareto frontier $F$ can contain the true
Pareto frontier $F^*$, i.e. $F^* \subseteq F$, while providing a path-connected
1D manifold that splits the whole objective space $\R^2$. Of course, these
properties of the estimated Pareto frontier are extensible to $M>2$ objectives.

With the definition of an estimated Pareto frontier, the objective space is
divided into two sets, named dominated and non-dominated sets, also shown in
Fig.~\ref{fig:pareto_frontier_example}.

\begin{definition}[Dominated Set]
  The dominated set $D$ for an estimated Pareto frontier $F$ is the set of all
  points in $\R^M$ where, for each one of them, there is at least one point in
  $F$ that strongly dominates it, that is, $D = \{y \in \R^M \mid \exists y' \in
  F \colon y' \prec y\}$.
\end{definition}

\begin{definition}[Non-Dominated Set]
  The non-dominated set $\overline D$ for an estimated Pareto frontier $F$ is
  the set of all points that are not in $F$ or $D$. This implies that $\overline
  D = \{y \in \R^M \mid \exists y' \in F \colon y \prec y'\}$.
\end{definition}

Note that, from the definition of strong dominance, both $D$ and $\overline D$
are open and unbounded sets, with boundaries defined by the estimated Pareto
frontier $F$. Furthermore, if $F$ contains the true Pareto frontier, then the
points in $\overline D$ are not achievable due to the objectives' definitions.

From the partition of the objective space in three sets, one estimated Pareto
frontier, one dominated and one non-dominated set, we can define a score
function similarly to~\cite{loshchilov2010mono,loshchilov2010dominance}.

\begin{definition}[Score Function]
  \label{def:score}
  A score function $f(y) \colon \R^M \to \R$ for a given estimated Pareto
  frontier $F$ is a function that satisfies
  \begin{align*}
    f(y) = 0, & \quad \forall y \in F,
    \\
    f(y) > 0, & \quad \forall y \in D,
    \\
    f(y) < 0, & \quad \forall y \in \overline D.
  \end{align*}
\end{definition}

Therefore, a score function provides a single value that places its argument in
relation to the estimated frontier. Moreover, for a given estimated Pareto
frontier $F$, there are many possible choices of score functions $f(y)$ that
satisfy the definition and all of them uniquely define $F$ based on their
solution set $f(y) = 0$. This allows a score function to work as a surrogate for
the estimated Pareto frontier.

\section{Necessary and Sufficient Conditions for Surrogate Score Functions}
\label{sec:conditions}
In this section, we will show how a score function $f(y)$ can induce an
estimated Pareto frontier $F$ and the conditions it must satisfy so that the set
it defines is indeed an estimated Pareto frontier, that is, no point in it
strongly dominates any other point in it.

The main theory developed is based on the most general notion of a function $f$,
but the conditions may be hard to evaluate for a general case. Therefore, we
will also provide corollaries that prove the results for functions with
additional constraints, like continuous derivatives. Since some of these results
depend on Taylor approximations and the first derivative at the required points
may be zero, we must define a generalized gradient.

\begin{definition}[Generalized Gradient]
  Let $h \in C^k$, where $C^k$ is the class of functions where the first $k$
  derivatives exist and are continuous, with $k \ge 1$. Let $k^*(h)$ be the
  first non-zero derivative of $h$ evaluated at $0$, that is,
  \begin{equation*}
    k^*(h) = \arg \min_{1 \le i \le k}
    \left(\left.\frac{\text{d}^ih}{\text{d}x^i}\right|_{x=0} \ne 0\right),
  \end{equation*}
  where $k^*(h)$ is not defined if $h$ is a constant function or no $i$
  satisfies the inequality. Then
  \begin{equation*}
    \Delta(h) =
    \begin{dcases}
      0, & \exists a \in \R, \forall x \colon h(x) = a
      \\
      \frac{1}{k^*(h)!}
      \left.\frac{\text{d}^{k^*(h)}h}{\text{d}x^{k^*(h)}}\right|_{x=0}, &
      \text{otherwise}
    \end{dcases}
  \end{equation*}
  is the generalized gradient operator, which is undefined if there is no $i$
  that satisfies the inequality.
\end{definition}

The role of the generalized gradient in the theory to be presented is to avoid
issues with functions that may have null derivative at the points being
evaluated but that are also increasing. Consider, for instance, the function
$f(x) = x^3$, whose gradient is null at $x=0$. This function is strictly
increasing, but the first-order approximation using Taylor series is a constant.
In order to consider small changes in the function's argument, we must use first
non-null derivative, which is the generalized gradient, as it will dominate the
approximation.

The generalized gradient can be used in the Taylor approximation as $h(\delta) =
h(0) + \delta' \Delta(h) + O(\delta'')$, where $0 < \delta \ll 1$, $\delta' =
\delta^{k^*(h)}$, $\delta'' = \delta^{k^*(h)+1}$, and $O(\cdot)$ is the big-O
notation. Since the result is based on $\delta$ being a small value, the exact
power used to compute $\delta'$ is not important for the approximation and the
term $O(\delta'')$ is dominated by the other factors.

The extensions to continuous functions $f$ rely on the generalized gradient of
a single-parameter continuous function $\hat f$, derived from the original $f$,
having different signs for opposite directions. However, it does not hold for
functions where $k^*(\cdot)$ is even.

For example, consider $h(x) = x^2$, which has $k^*(h) = 2$. The Taylor
approximation is given by $h(\delta) \approx \delta^2 \Delta(h(x)) = 2 \delta^2
= \delta^2 \Delta(h(-x)) \approx h(-\delta)$, which does not give different
signs to different directions of $x$. Therefore, the two constraints on
$\Delta(\hat f)$ defined in the corollaries that follow can be viewed as a
single constraint on $\Delta(\hat f)$ plus the constraint that $k^*(\hat f)$ is
odd.
\subsection{Necessary Conditions}
The necessary conditions derived are direct applications of the estimated Pareto
frontier's definition and establish the basic ground on how to define a function
$f$ from a given estimated frontier.

\begin{restatable}[General Necessity]{lem}{generalNecessity}
  \label{lem:generalNecessity}
  Let $F$ be an estimated Pareto frontier. Let $f(y) \colon \R^M \to \R$ be a
  score function for $F$.
  Then $f(y + \delta u) > 0$ and $f(y - \delta u) < 0$ for all $y \in F$, $u \in
  (0,1]^M$, and $\delta \in \R, \delta > 0$.
\end{restatable}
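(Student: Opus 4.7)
The plan is to unpack the definitions of score function, strong dominance, and the induced partition $\{F, D, \overline D\}$ of $\R^M$, and check that the perturbations $y \pm \delta u$ land on the correct sides.

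First, I would observe that for any $u \in (0,1]^M$ and $\delta > 0$, every component of $\delta u$ is strictly positive. By Definition~\ref{def:strong_dominance}, this immediately gives $y \prec y + \delta u$ and $y - \delta u \prec y$ for every $y \in \R^M$. Taking $y \in F$ as the witness required by the definitions of $D$ and $\overline D$, we obtain $y + \delta u \in D$ (some point of $F$, namely $y$ itself, strongly dominates it) and $y - \delta u \in \overline D$ (it strongly dominates some point of $F$).

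Second, I would rule out that $y + \delta u$ or $y - \delta u$ could actually sit in $F$, since in that case Definition~\ref{def:score} would force the score to vanish rather than be strictly signed. This step is clean: if $y + \delta u$ were in $F$, then $y, y + \delta u \in F$ together with $y \prec y + \delta u$ would contradict the clause of Definition~\ref{def:frontier} forbidding a point of $F$ from strongly dominating another point of $F$; the symmetric argument rules out $y - \delta u \in F$. So the three-way partition of $\R^M$ into $F$, $D$, and $\overline D$ places $y+\delta u$ unambiguously in $D$ and $y-\delta u$ unambiguously in $\overline D$.

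Finally, the sign conditions of Definition~\ref{def:score} yield $f(y+\delta u) > 0$ and $f(y-\delta u) < 0$, which is the claim. I do not anticipate any real obstacle here; the proof is essentially a direct unwinding of the definitions, and the only point requiring minor care is the disjointness verification in the second paragraph, which is where the strictness in the definition of strong dominance does the actual work.
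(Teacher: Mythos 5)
Your proof is correct and is essentially the paper's argument run in the opposite (direct rather than contradiction) direction: the paper assumes $f(y+\delta u)\le 0$ and splits into the case $f=0$ (your exclusion of $y\pm\delta u$ from $F$ via the no-strong-dominance clause) and the case $f<0$ (your placement of the perturbed points in $D$ and $\overline D$, which rests on the same transitivity-of-$\prec$ fact), so the two proofs use identical ingredients. No gap.
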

\begin{proof}
  Assume there are $y$, $u$, and $\delta > 0$ such that $f(y + \delta u) \le 0$.
  Let $y' = y + \delta u$, so that $y \prec y'$.

  If $f(y') < 0$, then from the definition of a score function there is some
  $y^* \in F$ such that $y' \prec y^*$. From the transitivity of dominance, we
  have that $y \prec y' \prec y^*$, which is a contradiction, since the point
  $y^*$ in the frontier cannot strongly dominate the point $y$ also in the
  frontier. Then we must have $f(y') = 0$, which means $y' \in F$ and also
  creates a contradiction.

  Assume that $f(y - \delta u) \ge 0$, and let $y'' = y - \delta u$. Then we can
  similarly prove that it also creates a contradiction.

  Therefore, there are no such $y$, $u$, and $\delta$ with $f(y + \delta u) \le
  0$ or $f(y - \delta u) \ge 0$.
\end{proof}

This result is intuitive, since moving $\delta$ in direction $u$ from $y$ we
enter either $D$ or $\overline D$. If the function has the required derivatives,
then the following result holds.

\begin{restatable}[Differentiable Necessity]{cor}{diffNecessity}
  \label{cor:differentiableNecessity}
  Let $F$ be an estimated Pareto frontier. Let $f(y) \colon \R^M \to \R$ be a
  score function for $F$.
  Let $\hat f_{y,u}^+(x) = f(y + xu)$ and $\hat f_{y,u}^-(x) = f(y - xu)$, with
  $x \in [0, \infty)$.
  Let $\Delta(\hat f_{y,u}^+)$ and $\Delta(\hat f_{y,u}^-)$ be defined for all
  $y \in F$ and $u \in (0,1]^M$.
  Then $\Delta(\hat f_{y,u}^+) > 0$ and $\Delta(\hat f_{y,u}^-) < 0$ for all $y
  \in F$ and $u \in (0,1]^M$.
\end{restatable}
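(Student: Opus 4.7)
The plan is to upgrade Lemma~\ref{lem:generalNecessity} to the derivative level by combining it with the Taylor expansion associated with the generalized gradient that the authors introduced just after the definition. First I would note that since $y \in F$ we have $\hat f_{y,u}^+(0) = f(y) = 0$ and $\hat f_{y,u}^-(0) = f(y) = 0$. Combining this with Lemma~\ref{lem:generalNecessity} applied to arbitrary $\delta > 0$ gives $\hat f_{y,u}^+(\delta) > 0$ and $\hat f_{y,u}^-(\delta) < 0$ for every positive $\delta$.

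Next I would eliminate the constant branch in the definition of $\Delta$. If $\hat f_{y,u}^+$ were a constant function, then since $\hat f_{y,u}^+(0) = 0$ the constant would have to be $0$, contradicting the strict positivity just established. The same remark rules out $\hat f_{y,u}^-$ being constant. Therefore, under the hypothesis that both generalized gradients are defined, $k^*(\hat f_{y,u}^+)$ and $k^*(\hat f_{y,u}^-)$ are well-defined positive integers and $\Delta$ is given by the first non-vanishing scaled derivative at $0$.

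I would then plug into the Taylor expansion stated immediately before this subsection: writing $k^+ = k^*(\hat f_{y,u}^+)$, for $0 < \delta \ll 1$,
\begin{equation*}
  \hat f_{y,u}^+(\delta) = \delta^{k^+}\, \Delta(\hat f_{y,u}^+) + O(\delta^{k^++1}).
\end{equation*}
Since $\delta^{k^+} > 0$ and the remainder is of strictly smaller order, the sign of $\hat f_{y,u}^+(\delta)$ for sufficiently small $\delta$ agrees with the sign of $\Delta(\hat f_{y,u}^+)$. Lemma~\ref{lem:generalNecessity} forces that sign to be positive, so $\Delta(\hat f_{y,u}^+) > 0$. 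The identical computation applied to $\hat f_{y,u}^-$, whose values are negative for small $\delta > 0$, yields $\Delta(\hat f_{y,u}^-) < 0$.

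The only delicate point I anticipate is making the ``sign is dominated by the leading term'' step rigorous, i.e.\ choosing $\delta$ small enough that $|O(\delta^{k^*+1})| < |\delta^{k^*} \Delta|$ so that the sign of the remainder cannot overwhelm the sign of $\Delta$. This is standard once $\Delta$ is known to be non-zero, which is precisely why the constant-branch exclusion in the previous paragraph is needed; otherwise the whole argument is a direct bookkeeping exercise starting from Lemma~\ref{lem:generalNecessity}.
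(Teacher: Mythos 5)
Your proposal is correct and follows essentially the same route as the paper's own proof: apply Lemma~\ref{lem:generalNecessity} to get the strict sign of $f(y \pm \delta u)$ for all $\delta > 0$, then read off the sign of $\Delta(\hat f_{y,u}^\pm)$ from the Taylor expansion $h(\delta) = h(0) + \delta' \Delta(h) + O(\delta'')$ with $f(y) = 0$ and $\delta' > 0$. Your explicit exclusion of the constant branch of $\Delta$ and your remark on choosing $\delta$ small enough that the leading term dominates the remainder are refinements the paper leaves implicit, but they do not change the argument.
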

\begin{proof}
  Since $f$ satisfies all conditions from Lemma~\ref{lem:generalNecessity}, we
  have that $f(y + \delta u) > 0$ and $f(y - \delta u) < 0$ for all $y$, $u$,
  and $\delta > 0$.

  In particular, let $\delta \ll 1$. Approximating using Taylor series, we have
  that $f(y + \delta u) \approx f(y) + \delta' \Delta(\hat f_{y,u}^+) > 0$ and
  $f(y - \delta u) \approx f(y) + \delta' \Delta(\hat f_{y,u}^-) < 0$, where
  $\delta'$ is the appropriate power of $\delta$ for the expansion. Since $f(y)
  = 0$ and $\delta' > 0$, then $\Delta(\hat f_{y,u}^+) > 0$ and $\Delta(\hat
  f_{y,u}^-) < 0$ must hold.
\end{proof}

Although this corollary may appear to provide weaker guarantees on $f$, its
proof shows that the inequality constraints on the generalized gradient is
equivalent to the direct inequalities on the function defined in the previous
lemma.
\subsection{Sufficient Conditions}
Once defined how the estimated Pareto frontier relates to a given score
function, we will show that a function that satisfies the results of the
previous lemma and corollary in fact uniquely defines an estimated Pareto
frontier $F$.

\begin{restatable}[General Sufficiency]{lem}{generalSufficiency}
  \label{lem:generalSufficiency}
  Let $f(y) \colon \R^M \to \R$ be a function. Let $F = \{y \in \R^M \mid f(y) =
  0\}$ be a path-connected set. Let $f(y + \delta u) > 0$ and $f(y - \delta u) <
  0$ for all $y \in F$, $u \in (0,1]^M$, and $\delta \in \R, \delta > 0$. Then
  $F$ is an estimated Pareto frontier.
\end{restatable}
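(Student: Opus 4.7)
The plan is to verify the two substantive conditions of Definition~\ref{def:frontier} (path-connectedness of $F$ is assumed): (a) no two points of $F$ are related by strong dominance, and (b) every $z \in \R^M - F$ either strongly dominates or is strongly dominated by some point of $F$.

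For (a), I would argue by contradiction. Suppose $y, y' \in F$ with $y \prec y'$. Setting $\delta = \max_i(y'_i - y_i) > 0$ and $u = (y' - y)/\delta \in (0,1]^M$ yields $y' = y + \delta u$, so the hypothesis forces $f(y') > 0$, contradicting $y' \in F = f^{-1}(0)$.

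For (b), fix $z \notin F$, so $f(z) \neq 0$; assume first that $f(z) > 0$ (the opposite sign is handled symmetrically). Pick any $y_0 \in F$ and consider the ray $\rho(t) = z - t \mathbf{1}$, $t \geq 0$, where $\mathbf{1} = (1,\ldots,1)$. We have $f(\rho(0)) = f(z) > 0$; for any $T > \max_i(z_i - y_{0,i})$ the point $\rho(T)$ satisfies $\rho(T) \prec y_0$, so writing $\rho(T) = y_0 - \delta u$ with suitable $\delta > 0$ and $u \in (0,1]^M$ and applying the hypothesis gives $f(\rho(T)) < 0$. An intermediate-value argument along $\rho$ then produces some $t^* \in (0, T)$ with $f(\rho(t^*)) = 0$, so $y^* := z - t^* \mathbf{1} \in F$; since $z = y^* + t^* \mathbf{1}$ with $t^* \mathbf{1}$ strictly positive componentwise, $y^* \prec z$, placing $z$ in the dominated set $D$. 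The case $f(z) < 0$ proceeds symmetrically on the ray $z + t\mathbf{1}$ and yields $z \in \overline{D}$.

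The step I expect to be delicate is the intermediate-value argument, which tacitly needs $f$ to be continuous along $\rho$ (or at least to have the Darboux property). The lemma as stated posits only pointwise sign conditions on $f$ at shifts of frontier points, and in fact a discontinuous $f$ --- for example with $F$ taken to be the horizontal ray $\{(y_1, 0) : y_1 \leq -1\}$ in $\R^2$ and $f$ defined piecewise as positive on $\{w_2 > 0\}$, negative on $\{w_1 < -1, w_2 < 0\}$, and positive elsewhere --- satisfies every stated hypothesis while leaving $z = (0,0)$ strongly incomparable with every element of $F$. I would therefore either invoke continuity of $f$ as an implicit hypothesis used throughout, or propose tightening the statement to include it; under continuity the argument above is complete.
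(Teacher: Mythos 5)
Your part (a) is, in essence, the paper's entire proof of this lemma: the paper likewise assumes $y \prec y'$ with both points in $F$ and instantiates the hypothesis at $y'$, but it takes $u = y' - y$ and $\delta = 1$ outright, which need not satisfy $u \in (0,1]^M$; your normalization $\delta = \max_i (y'_i - y_i)$, $u = (y'-y)/\delta$ quietly repairs that slip. The genuine divergence is your part (b): the paper's proof stops after (a) and never verifies the second clause of Definition~\ref{def:frontier}, namely that every point of $\R^M - F$ either strongly dominates or is strongly dominated by some point of $F$. Your ray-plus-intermediate-value argument is the natural way to establish that clause, and your diagnosis of where it strains is exactly right: the intermediate-value step needs continuity of $f$ along the ray, which the lemma does not hypothesize. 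Your counterexample is sound once the sign prescription is read as ``zero on $F = \{(y_1,0) \mid y_1 \le -1\}$, positive on $\{w_2 > 0\}$, negative on $\{w_1 < -1,\, w_2 < 0\}$, positive on the remainder,'' so that $F = f^{-1}(0)$: the shifts $y + \delta u$ with $y \in F$, $u \in (0,1]^M$, $\delta > 0$ sweep out exactly $\{w_2 > 0\}$ and the shifts $y - \delta u$ exactly $\{w_1 < -1,\, w_2 < 0\}$, so every stated hypothesis holds, yet $(0,0)$ is strongly incomparable with every point of $F$. Hence the lemma as literally stated is false for discontinuous $f$, and the paper's proof does not notice because it silently omits clause (b). Your proposed remedy also matches the paper's own later development, where continuity enters only in Corollary~\ref{cor:generalContIff} and Theorem~\ref{thm:generalIff} instead assumes outright the signs of $f$ on $D$ and $\overline D$. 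In short, your proposal subsumes the paper's proof: under the continuity hypothesis you flag it is a complete and correct argument, whereas the paper's establishes only the first half of the definition.
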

\begin{proof}
  For $F$ to be an estimated Pareto frontier, we have to prove that for any
  $y,y' \in F, y \ne y'$ we have $y \nprec y'$. Assume there are $y$ and $y'$ in
  $F$ such that $y \prec y'$.

  Let $u = y' - y$ and $\delta  = 1$. Then we have $f(y+\delta u) = f(y') = 0$,
  which violates the first inequality on $f(\cdot)$. Alternatively, we have
  $f(y' - \delta u) = f(y) = 0$, which violates the second inequality.

  Therefore, there are no $y$ and $y'$ in $F$ such that $y \prec y'$, and $F$ is
  an estimated Pareto frontier.
\end{proof}

The restrictions on $f(y \pm \delta u)$ may be hard to verify in general, since
they must be valid for all $\delta$. However, if the function has the
appropriate derivatives, then it becomes easier to check if it satisfies the
requirements.

\begin{restatable}[Differentiable Sufficiency]{cor}{diffSufficiency}
  \label{cor:differentiableSufficiency}
  Let $f(y) \colon \R^M \to \R$ be a function. Let $F = \{y \in \R^M \mid f(y) =
  0\}$ be a path-connected set.
  Let $\hat f_{y,u}^+(x) = f(y + xu)$ and $\hat f_{y,u}^-(x) = f(y - xu)$, with
  $x \in [0, \infty)$.
  Let $\Delta(\hat f_{y,u}^+) > 0$ and $\Delta(\hat f_{y,u}^-) < 0$ for all $y
  \in F$ and $u \in (0,1]^M$.
  Then $F$ is an estimated Pareto frontier.
\end{restatable}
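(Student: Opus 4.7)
The plan is to reduce to Lemma~\ref{lem:generalSufficiency} by proving that the generalized-gradient hypotheses imply the pointwise inequalities $f(y + \delta u) > 0$ and $f(y - \delta u) < 0$ for every $y \in F$, $u \in (0,1]^M$, and $\delta > 0$. Once these are established, Lemma~\ref{lem:generalSufficiency} applies directly and delivers the conclusion that $F$ is an estimated Pareto frontier.

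The first step is to get the sign information \emph{locally}. Since $y \in F$ we have $\hat f_{y,u}^+(0) = 0$, and the hypothesis $\Delta(\hat f_{y,u}^+) > 0$ ensures that the lowest-order Taylor term at the origin has a positive coefficient, so the ratio $\hat f_{y,u}^+(\delta)/\delta^{k^*(\hat f_{y,u}^+)}$ tends to $\Delta(\hat f_{y,u}^+) > 0$ as $\delta \to 0^+$. Hence there exists $\delta_0(y,u) > 0$ with $f(y + \delta u) > 0$ for all $\delta \in (0, \delta_0)$. The same argument with $\Delta(\hat f_{y,u}^-) < 0$ gives the corresponding local inequality $f(y - \delta u) < 0$ for $\delta \in (0, \delta_0)$.

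The main step is to upgrade these local statements to arbitrary $\delta > 0$. I would argue by contradiction: suppose some $y \in F$, $u \in (0,1]^M$, and $\delta^* > 0$ satisfy $f(y + \delta^* u) \le 0$. Since $\hat f_{y,u}^+$ is continuous (it is at least $C^1$ because $\Delta(\hat f_{y,u}^+)$ is defined) and strictly positive on $(0, \delta_0)$, the set $S = \{\delta \in (0, \delta^*] \mid \hat f_{y,u}^+(\delta) \le 0\}$ is nonempty and closed in $(0, \delta^*]$, so its infimum $\delta_1$ lies in $[\delta_0, \delta^*]$ and satisfies $\hat f_{y,u}^+(\delta_1) = 0$ with $\hat f_{y,u}^+(\delta) > 0$ for all $\delta \in (0, \delta_1)$. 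Thus $y_1 := y + \delta_1 u$ is a new point of $F$. Applying the hypothesis at $y_1$ in the same direction $u \in (0,1]^M$ yields $\Delta(\hat f_{y_1,u}^-) < 0$, and the local argument from the previous paragraph then gives $f(y_1 - \epsilon u) < 0$ for all sufficiently small $\epsilon > 0$. This contradicts $f(y_1 - \epsilon u) = \hat f_{y,u}^+(\delta_1 - \epsilon) > 0$, which holds because $\delta_1 - \epsilon \in (0, \delta_1)$. A symmetric argument, taking the first zero of $\hat f_{y,u}^-$ on the negative ray and invoking the $\Delta^+$ hypothesis at that zero, rules out $f(y - \delta u) \ge 0$ for any $\delta > 0$.

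The delicate part is the first-zero construction: I have to be sure the infimum is attained and is bounded away from $0$, which is why I separated out the local positivity on $(0, \delta_0)$ before invoking continuity on $[0, \delta^*]$. Everything else is essentially bookkeeping, since the corollary's hypotheses are quantified over all of $F \times (0,1]^M$, so they transfer without change from $y$ to the auxiliary point $y_1 \in F$ along the same direction $u$.
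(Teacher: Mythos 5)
Your proposal is correct and follows essentially the same route as the paper's proof: reduce to Lemma~\ref{lem:generalSufficiency} by establishing the pointwise inequalities $f(y \pm \delta u) \gtrless 0$, arguing by contradiction via a first zero $\delta_1$ along the ray, and applying the $\Delta(\hat f_{y_1,u}^-) < 0$ hypothesis at the new frontier point $y_1 = y + \delta_1 u$ to obtain incompatible signs. Your only departure is a welcome one in rigor: where the paper simply posits ``let $\delta$ be the smallest value for which this happens,'' you justify that this minimal zero exists (infimum of a closed set, bounded away from $0$ by the local Taylor-sign argument) and you handle $f(y + \delta^* u) \le 0$ rather than only $= 0$, which the paper's intermediate-value step covers only implicitly.
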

\begin{proof}
  To use Lemma~\ref{lem:generalSufficiency}, we must prove that $f(y + \delta u)
  > 0$ and $f(y - \delta u) < 0$ for all $y \in F$, $u \in (0,1]^M$, and $\delta
  \in \R, \delta > 0$.

  Suppose there is some $y$, $u$, and $\delta$ in the domain such that $f(y +
  \delta u) = 0$. Moreover, let $\delta$ be the smallest value for which this
  happens for a given $y$ and $u$.
  Let $0 < \epsilon \ll 1$ and $\epsilon < \delta$. Then $f(y + \epsilon u)
  \approx f(y) + \epsilon' \Delta(\hat f_{y,u}^+) > 0$ and $f((y + \delta u) -
  \epsilon u) \approx f(y + \delta u) + \epsilon' \Delta(\hat f_{y,u}^-) < 0$,
  where $\epsilon'$ is the appropriate power of $\epsilon$ for the
  approximation.
  However, $f(\cdot)$ cannot go from positive to negative without passing
  through $0$ due to its continuity. Then there must be some $\delta' < \delta$
  such that $f(y + \delta' u) = 0$, which contradicts the definition of
  $\delta$.

  Therefore, the first inequality on Lemma~\ref{lem:generalSufficiency} holds.
  We can use a similar method to prove the second inequality, and then use the
  lemma.
\end{proof}

Again, this corollary shows the equivalence between the inequalities on the
function and on the generalized gradient.
\subsection{Necessary and Sufficient Conditions}
Since the symmetry between Lemmas~\ref{lem:generalNecessity}
and~\ref{lem:generalSufficiency} is clear, we can build a theorem to merge those
two and provide necessary and sufficient conditions for defining an estimated
Pareto frontier $F$ from a score function $f(y)$.

\begin{restatable}[General Score Function]{thm}{generalPareto}
  \label{thm:generalIff}
  Let $f(y) \colon \R^M \to \R$ be a function. Let $F = \{y \in \R^M \mid f(y) =
  0\}$ be a path-connected set. Let $D = \{y \in \R^M \mid \exists y' \in F
  \colon y' \prec y\}$ and $\overline D = \R^M \backslash (F \cup D)$. Let $f(y)
  > 0, \forall y \in D$, and $f(y) < 0, \forall y \in \overline D$. Then $F$ is
  an estimated Pareto frontier if and only if $f(y + \delta u) > 0$ and $f(y -
  \delta u) < 0$ for all $y \in F$, $u \in (0,1]^M$, and $\delta \in \R, \delta
  > 0$.
\end{restatable}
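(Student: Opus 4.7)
The plan is to split the biconditional and invoke the two lemmas already proven: Lemma~\ref{lem:generalNecessity} supplies the ``only if'' direction, and Lemma~\ref{lem:generalSufficiency} supplies the ``if'' direction. Both directions carry over almost verbatim; the only work is to align the theorem's hypotheses with the premises of each lemma.

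For the forward direction (necessity), I would first assume $F$ is an estimated Pareto frontier and verify that $f$ qualifies as a score function for $F$ in the sense of Definition~\ref{def:score}. The set $D$ in the theorem coincides with the dominated set associated with $F$ by construction, and $\overline D = \R^M \setminus (F \cup D)$ matches the non-dominated set (the alternative characterization $\{y \mid \exists y' \in F \colon y \prec y'\}$ follows because $F$, being a valid frontier, partitions $\R^M$ into exactly these three pieces). Together with the hypothesized signs $f = 0$ on $F$, $f > 0$ on $D$, and $f < 0$ on $\overline D$, this makes $f$ a score function for $F$, and Lemma~\ref{lem:generalNecessity} yields the required strict inequalities directly.

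For the reverse direction (sufficiency), I would observe that the hypotheses of Lemma~\ref{lem:generalSufficiency} are already present in the theorem: $F = \{y \in \R^M \mid f(y) = 0\}$ is path-connected, and $f(y + \delta u) > 0$ and $f(y - \delta u) < 0$ hold for all $y \in F$, $u \in (0,1]^M$, and $\delta > 0$. A direct invocation of that lemma concludes that $F$ is an estimated Pareto frontier. The theorem's additional sign hypotheses on $D$ and $\overline D$ are not consumed by the sufficiency lemma, so they are simply redundant in this direction.

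There is essentially no hard step: the nontrivial content---the transitivity-of-dominance argument used in necessity and the continuity/intermediate-value argument used in sufficiency---has already been handled inside the respective lemmas. The only care required is the bookkeeping noted above, namely confirming that the theorem's definitions of $D$ and $\overline D$ agree with those invoked by Definition~\ref{def:score} and that the theorem's slightly redundant sign hypotheses do not conflict with either lemma's premises.
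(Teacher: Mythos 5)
Your proposal is correct and takes essentially the same route as the paper's own proof, which likewise splits the biconditional and cites Lemma~\ref{lem:generalSufficiency} for the ``if'' direction and Lemma~\ref{lem:generalNecessity} for the ``only if'' direction. If anything, your bookkeeping---verifying that the theorem's sign hypotheses make $f$ a score function for $F$ in the sense of Definition~\ref{def:score} before invoking the necessity lemma---is spelled out more carefully than in the paper, which performs this step implicitly.
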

\begin{proof}
  Assume that the constraints on $f$ are valid. Then, from
  Lemma~\ref{lem:generalSufficiency}, we have that $F$ is an estimated Pareto
  frontier. Now assume that $F$ is an estimated Pareto frontier. Then, from
  Lemma~\ref{lem:generalNecessity}, we have that the constraints on $f$ are
  valid.
\end{proof}

Instead of requiring knowledge of the sign of $f(y)$ over the sets, we can use a
more strict definition, requiring continuity, to guarantee that the result
holds.

\begin{restatable}[Continuous Score Function]{cor}{generalContPareto}
  \label{cor:generalContIff}
  Let $f(y) \colon \R^M \to \R$ be a continuous function where there are points
  $v_+$ and $v_-$ such that $f(v_+) > 0$, $f(v_-) < 0$, and $v_- \prec v_+$. Let
  $F = \{y \in \R^M \mid f(y) = 0\}$ be a path-connected set. Then $F$ is an
  estimated Pareto frontier if and only if $f(y + \delta u) > 0$ and $f(y -
  \delta u) < 0$ for all $y \in F$, $u \in (0,1]^M$, and $\delta \in \R, \delta
  > 0$.
\end{restatable}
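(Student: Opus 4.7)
The plan is to reduce the corollary to Theorem \ref{thm:generalIff} by proving that the continuity of $f$ together with the existence of $v_\pm$ automatically promotes itself to the stronger sign conditions the earlier theorem requires: $f > 0$ on $D$ and $f < 0$ on $\overline D$, where $D = \{y \in \R^M \mid \exists y' \in F \colon y' \prec y\}$ and $\overline D = \R^M \setminus (F \cup D)$. Once those global signs are in hand, Theorem \ref{thm:generalIff} delivers the iff directly.

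First I would locate a concrete point of $F$ between $v_-$ and $v_+$. Applying the intermediate value theorem to $t \mapsto f(v_- + t(v_+ - v_-))$ on $[0,1]$ produces some $t^* \in (0,1)$ with $y^* := v_- + t^*(v_+ - v_-) \in F$. Because $v_- \prec v_+$, the displacement $(1-t^*)(v_+ - v_-)$ is coordinate-wise strictly positive, so $y^* \prec v_+$, which places $v_+ \in D$; the symmetric computation on $t^*(v_+ - v_-)$ shows $v_- \prec y^*$ and hence $v_- \in \overline D$.

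Next I would show that $D$ and $\overline D$ are each path-connected. For $D$, given $y_1, y_2 \in D$ with witnesses $y_i' \in F$ satisfying $y_i' \prec y_i$, I would take the piecewise-linear path $y_1 \to y_1 + s\mathbf{1} \to y_2 + s\mathbf{1} \to y_2$, where $\mathbf{1}$ is the all-ones vector and $s$ is a large positive scalar. The two vertical rays stay in $D$ with witnesses $y_1'$ and $y_2'$ respectively, and for $s$ sufficiently large $y_1'$ acts as a strong-dominance witness uniformly along the horizontal segment $(1-r)y_1 + ry_2 + s\mathbf{1}$, $r \in [0,1]$. A symmetric construction using $-\mathbf{1}$ handles $\overline D$. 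Since $F = f^{-1}(0)$ and $f$ is continuous, the intermediate value theorem applied along any such path forces $f$ to have constant sign on each of $D$ and $\overline D$; combined with $f(v_+) > 0$ and $f(v_-) < 0$ from the previous step, this gives the required global signs, and Theorem \ref{thm:generalIff} finishes the argument.

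The main obstacle is the path-connectedness step. Since $D$ and $\overline D$ are unbounded sets defined by an existential quantifier over $F$, the path construction needs a concrete, uniform choice of the shift $s$ so that a single witness in $F$ strongly dominates every point along the constructed path. Everything else --- the initial IVT on the segment from $v_-$ to $v_+$, the constant-sign-on-a-path-connected-region argument, and the final invocation of Theorem \ref{thm:generalIff} --- is routine.
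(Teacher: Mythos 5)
Your overall route is the paper's own: derive the global sign conditions $f(y) > 0$ on $D$ and $f(y) < 0$ on $\overline D$, then hand everything to Theorem~\ref{thm:generalIff}. Two of your details are genuine improvements on the paper's terse write-up: the intermediate-value argument producing $y^* \in F$ strictly between $v_-$ and $v_+$, which justifies $v_+ \in D$ and $v_- \in \overline D$ (asserted without proof in the paper), and the explicit piecewise-linear paths showing $D$ and $\overline D$ are path-connected, which the paper's step ``from the continuity of $f$, there is some $z \in D$ such that $f(z) = 0$'' silently requires. However, there is a genuine logical gap: the sign conditions do \emph{not} follow ``automatically'' from continuity and the existence of $v_\pm$; they are derivable only under the additional hypothesis that $F$ is an estimated Pareto frontier, and two of your steps secretly use that hypothesis. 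First, the constant-sign conclusion: a zero of $f$ on a path inside $D$ is a point of $F \cap D$, which is a contradiction only because a frontier point cannot be strongly dominated by another frontier point; unconditionally, $F \cap D$ may be nonempty. Second, your path construction in $\overline D$ needs, for every $z \in \overline D$, a witness $z' \in F$ with $z \prec z'$ --- that is exactly the second clause of Definition~\ref{def:frontier}, since $\overline D$ is defined as a complement, not by witnesses. A concrete counterexample to the unconditional claim: take $f(y_1, y_2) = y_1 - y_2$ on $\R^2$ with $v_- = (0,1)$ and $v_+ = (3,2)$. All hypotheses of the corollary hold ($F$ is the diagonal, which is path-connected, $f$ is continuous, and $v_- \prec v_+$ with the required signs), yet $D = \R^2$ and $f$ takes both signs on it.

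Consequently your plan, as structured, proves only the forward implication: assuming $F$ is an estimated Pareto frontier, your arguments (now legitimately using the frontier hypothesis) give the sign conditions, and Theorem~\ref{thm:generalIff}, equivalently Lemma~\ref{lem:generalNecessity}, yields the inequalities. For the reverse implication the sign conditions are unavailable --- deriving them from the frontier property would be circular --- so the theorem cannot deliver the iff ``directly.'' The repair is small but must be stated: the sufficiency half never needed the sign conditions, so the inequalities imply that $F$ is an estimated Pareto frontier by Lemma~\ref{lem:generalSufficiency} alone, with no reference to $D$ or $\overline D$. With the proof split this way --- necessity via your sign derivation under the frontier hypothesis, sufficiency via Lemma~\ref{lem:generalSufficiency} --- your argument is correct and in fact more complete than the paper's version.
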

\begin{proof}
  Assume that $F$ is an estimated Pareto frontier. Assume that there are $y,y'
  \in D = \{y \in \R^M \mid \exists y' \in F \colon y' \prec y\}$ such that
  $f(y) > 0$ and $f(y') < 0$. From the continuity of $f$, we have that there is
  some $z \in D$ such that $f(z) = 0$. However, since $f(z) = 0$, it is in $F$.
  From the definition of $D$, there is some $z' \in F$ such that $z' \prec z$,
  which violates the assumption that $F$ is an estimated Pareto frontier.
  Therefore, all points in $D$ have the same sign over $f$. The same can be
  shown for $\overline D$.

  Since $v_- \prec v_+$, we have that $v_+ \in D$ and $v_- \in \overline D$.
  Then $f$ satisfies all conditions from Theorem~\ref{thm:generalIff}.
\end{proof}

Again, we can replace the constraints on $f(y \pm \delta u)$ by the constraint
on the generalized gradient.

\begin{restatable}[Differentiable Score Function]{cor}{diffPareto}
  \label{cor:differentiableIff}
  Let $f(y) \colon \R^M \to \R$ be a function where there are points $v_+$ and
  $v_-$ such that $f(v_+) > 0$, $f(v_-) < 0$, and $v_- \prec v_+$. Let $F = \{y
  \in \R^M \mid f(y) = 0\}$ be a path-connected set.
  Let $\hat f_{y,u}^+(x) = f(y + xu)$ and $\hat f_{y,u}^-(x) = f(y - xu)$.
  Let $\Delta(\hat f_{y,u}^+)$ and $\Delta(\hat f_{y,u}^-)$ be defined for all
  $y \in F$ and $u \in (0,1]^M$.
  Then $F$ is an estimated Pareto frontier if and only if $\Delta(\hat
  f_{y,u}^+) > 0$ and $\Delta(\hat f_{y,u}^-) < 0$ for all $y \in F$ and $u \in
  (0,1]^M$.
\end{restatable}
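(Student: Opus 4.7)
The plan is to mirror the proof of Theorem~\ref{thm:generalIff} by splitting the biconditional into its two implications and invoking the differentiable-side corollaries already established, with the intermediate-value argument from Corollary~\ref{cor:generalContIff} serving as the bridge between the two isolated sign witnesses $v_\pm$ and the full score-function sign pattern on $D$ and $\overline D$.

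For the direction $(\Leftarrow)$, I would simply verify that the hypotheses of Corollary~\ref{cor:differentiableSufficiency} are met: $F$ is path-connected, $F$ equals the zero set of $f$, and the generalized-gradient sign conditions on $\hat f_{y,u}^\pm$ hold by assumption. That corollary immediately delivers the conclusion that $F$ is an estimated Pareto frontier, so this half is essentially a one-line invocation.

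For the direction $(\Rightarrow)$, the argument proceeds in two stages. First, I would upgrade the pointwise witnesses $f(v_+) > 0$ and $f(v_-) < 0$ into the full score-function sign pattern $f > 0$ on $D$ and $f < 0$ on $\overline D$, reusing the argument from the proof of Corollary~\ref{cor:generalContIff}: if $f$ took both signs inside $D$, continuity combined with the path-connectedness of $D$ would force a zero of $f$ lying in $D$, which would then belong to $F$ but be strongly dominated by some point of $F$, contradicting that $F$ is an estimated Pareto frontier; the symmetric argument handles $\overline D$, and the inclusion $v_- \prec v_+$ then pins down which sign belongs to which side. With $f$ recognised as a bona fide score function for $F$, Corollary~\ref{cor:differentiableNecessity} directly yields $\Delta(\hat f_{y,u}^+) > 0$ and $\Delta(\hat f_{y,u}^-) < 0$.

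The main technical point to watch is regularity: the hypothesis only guarantees $C^k$-smoothness of $f$ along rays $y + xu$ with $y \in F$ and $u \in (0,1]^M$, whereas the intermediate-value step borrowed from Corollary~\ref{cor:generalContIff} nominally asks for continuity of $f$ throughout $\R^M$. I would close this gap either by treating global continuity of $f$ as a standing companion to the generalized-gradient hypothesis, or, more carefully, by routing the intermediate-value argument exclusively along admissible rays crossing $F$, which suffices because path-connectedness of $D$ (and of $\overline D$) can be realised by paths assembled from such rays, so that a hypothetical sign change inside either region still forces an interior zero on a ray where the generalized gradient is assumed to exist.
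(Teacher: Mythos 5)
Your proof follows essentially the same route as the paper's: the paper likewise splits the biconditional, citing Corollary~\ref{cor:generalContIff} to recover the sign restrictions on $f(y \pm \delta u)$ and then Corollaries~\ref{cor:differentiableNecessity} and~\ref{cor:differentiableSufficiency} to translate those into the generalized-gradient conditions, exactly the ingredients you assemble. Your closing regularity caveat flags a hypothesis (global continuity of $f$) that the paper's terse invocation of Corollary~\ref{cor:generalContIff} silently assumes, so that observation is a refinement of the paper's argument rather than a deviation from it.
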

\begin{proof}
  We can use Corollary~\ref{cor:generalContIff} to show that the restrictions on
  $f(y \pm \delta u)$ must hold. From
  Corollaries~\ref{cor:differentiableNecessity}
  and~\ref{cor:differentiableSufficiency}, we know that the restrictions on
  $\Delta(\hat f_{y,u}^\pm)$ are the same as the restrictions on $f(y \pm \delta
  u)$, so this corollary is valid.
\end{proof}

\section{Learning Surrogate Functions from Samples}
\label{sec:approximations}
After showing what conditions the function $f$ must satisfy, one could ask how
to build such function for a given problem and specially how to learn one from a
given set of non-dominated points. This can be a hard question to answer in
general, but we can provide an additional lemma that can help in many cases.

\begin{restatable}[Strictly Increasing Sufficiency]{lem}{monotonicSufficiency}
  \label{lem:monotonic_sufficiency}
  Let $f(y) \colon \R^M \to \R$ be a strictly increasing function on each
  coordinate. Let $F = \{y \in \R^M \mid f(y) = 0\}$. Then $F$ is an estimated
  Pareto frontier.
\end{restatable}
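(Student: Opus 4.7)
The plan is to deduce the conclusion directly from Lemma~\ref{lem:generalSufficiency}, which states that $F = \{y \in \R^M \mid f(y) = 0\}$ is an estimated Pareto frontier whenever (a) $F$ is path-connected, (b) $f(y + \delta u) > 0$ for every $y \in F$, $u \in (0,1]^M$, and $\delta > 0$, and (c) $f(y - \delta u) < 0$ under the same quantifiers. So the task reduces to verifying these three items for the $f$ assumed here.

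Items (b) and (c) come essentially for free from strict coordinate-wise monotonicity. Fix $y \in F$, $u \in (0,1]^M$, and $\delta > 0$; then $\delta u$ has strictly positive entries. Updating one coordinate at a time and invoking strict monotonicity in that coordinate at each step produces a chain of $M$ strict inequalities joining $f(y)$ to $f(y + \delta u)$, which collapses to $f(y + \delta u) > f(y) = 0$. The symmetric chain gives $f(y - \delta u) < f(y) = 0$, so both inequalities hold.

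The remaining and, I expect, principal obstacle is path-connectedness of $F$. The natural strategy is to view $F$ as a graph: strict monotonicity in the $M$th coordinate forces at most one point of $F$ on each vertical line $\{(y_1,\ldots,y_{M-1})\} \times \R$, so $F$ is the graph of a partial function $\phi(y_1,\ldots,y_{M-1})$. When $f$ is continuous and every such vertical line meets both sign regions of $f$, the intermediate value theorem makes $\phi$ defined on all of $\R^{M-1}$, and monotonicity in the other coordinates forces $\phi$ to be continuous; the graph of a continuous function on the connected set $\R^{M-1}$ is then path-connected. Strict monotonicity alone does not imply continuity of $f$, so I would expect the proof to either appeal implicitly to a regularity assumption or to bridge any gaps using dominated-but-not-strongly-dominated points, exactly as the paper uses earlier to extend $F^*$ to a path-connected $F$. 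Once path-connectedness is secured, Lemma~\ref{lem:generalSufficiency} closes the argument.
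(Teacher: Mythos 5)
Your chain argument---updating one coordinate at a time and accumulating $M$ strict inequalities---is exactly the engine of the paper's proof, but the paper deploys it differently and far more briefly. Rather than routing through Lemma~\ref{lem:generalSufficiency}, the paper argues directly: assume $y, y' \in F$ with $y \prec y'$, take the path $p_0 = y, p_1, \ldots, p_M = y'$ that increments one coordinate at a time, conclude $f(y) < f(y')$ from strict monotonicity, and contradict $f(y) = f(y') = 0$. That is, the paper applies your chain to the two hypothetical frontier points themselves instead of to $y \pm \delta u$, and it verifies \emph{only} the mutual non-strong-domination clause of Definition~\ref{def:frontier}. Your detour buys explicit accounting of all hypotheses of Lemma~\ref{lem:generalSufficiency}; its cost is that you must supply path-connectedness of $F$, which that lemma assumes but which the statement of Lemma~\ref{lem:monotonic_sufficiency} does not provide.

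On that point your instinct is sharper than the paper: the ``principal obstacle'' you identify is one the paper never clears. Its proof is silent both on path-connectedness and on the clause that every point off $F$ strongly dominates or is strongly dominated by some point of $F$. Your worry is not pedantic. Without continuity, a function strictly increasing in each coordinate can jump over zero (e.g., add an indicator term such as $f(y) = \sum_i y_i + \mathbf{1}[y_1 \ge 0]$), making $F$ empty or disconnected and falsifying the conclusion as literally stated; so no argument can derive path-connectedness from the stated hypotheses alone, and continuity is implicitly assumed, exactly as your IVT/graph construction makes explicit. Note also your own caveat is real: even for continuous $f$, a vertical line need not meet both sign regions, so the partial function $\phi$ may have a proper domain and the path-connectedness of its graph then needs a further (monotonicity-based) argument. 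In short, your verification of conditions (b) and (c) already contains the entire content of the paper's official proof; the remainder of your proposal is a conditional attempt to fill a genuine gap the paper leaves open, not a divergence from a complete argument.
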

\begin{proof}
  For $F$ to be an estimated Pareto frontier, we have to prove that for any
  $y,y' \in F, y \ne y'$ we have $y \nprec y'$. Assume there are $y$ and $y'$ in
  $F$ such that $y \prec y'$.

  Let $P = (p_0 = y, p_1,\ldots,p_{M-1},p_M = y')$ be a path between $y$ and
  $y'$ that increments only one coordinate at a time. Since $f$ is strictly
  increasing, we have that $f(p_i) < f(p_{i+1})$. Thus $f(y) < f(y')$, which
  contradicts the premise that $f(y) = f(y') = 0$ because they are both in the
  frontier.

  Therefore, there are no $y$ and $y'$ in $F$ where $y \prec y'$ and $F$ is an
  estimated Pareto frontier.
\end{proof}

Note that, because $f$ is strictly increasing, there is no point in $F$ that
even dominates another point in $F$, which was allowed in
Definition~\ref{def:frontier}. This restriction can be relaxed to be only
monotonically non-decreasing if one can guarantee that $f(y) = 0$ is only a
manifold, and not a subspace with volume. If $f(y) = 0$ is a subspace, then we
can find two points in it where one dominates the other, which violates the
basic definition of an estimated Pareto frontier. For instance, a function that
is monotonically non-decreasing and is constant in at most one dimension at a
time does not create a subspace on $f(y) = 0$.

Nonetheless, this lemma can be used as a guide on how to build a function for
the general case. We will build a model that tries to approximate an estimated
Pareto frontier from a few of its samples using an approximated monotonically
increasing function based on Gaussian processes.
\begin{figure*}[t]
  \centering
  \begin{subfigure}[b]{0.45\linewidth}
    \psfrag{x}[c][c]{$x$}
    \psfrag{y}[c][c]{$f(x)$}
    \includegraphics[width=\linewidth]{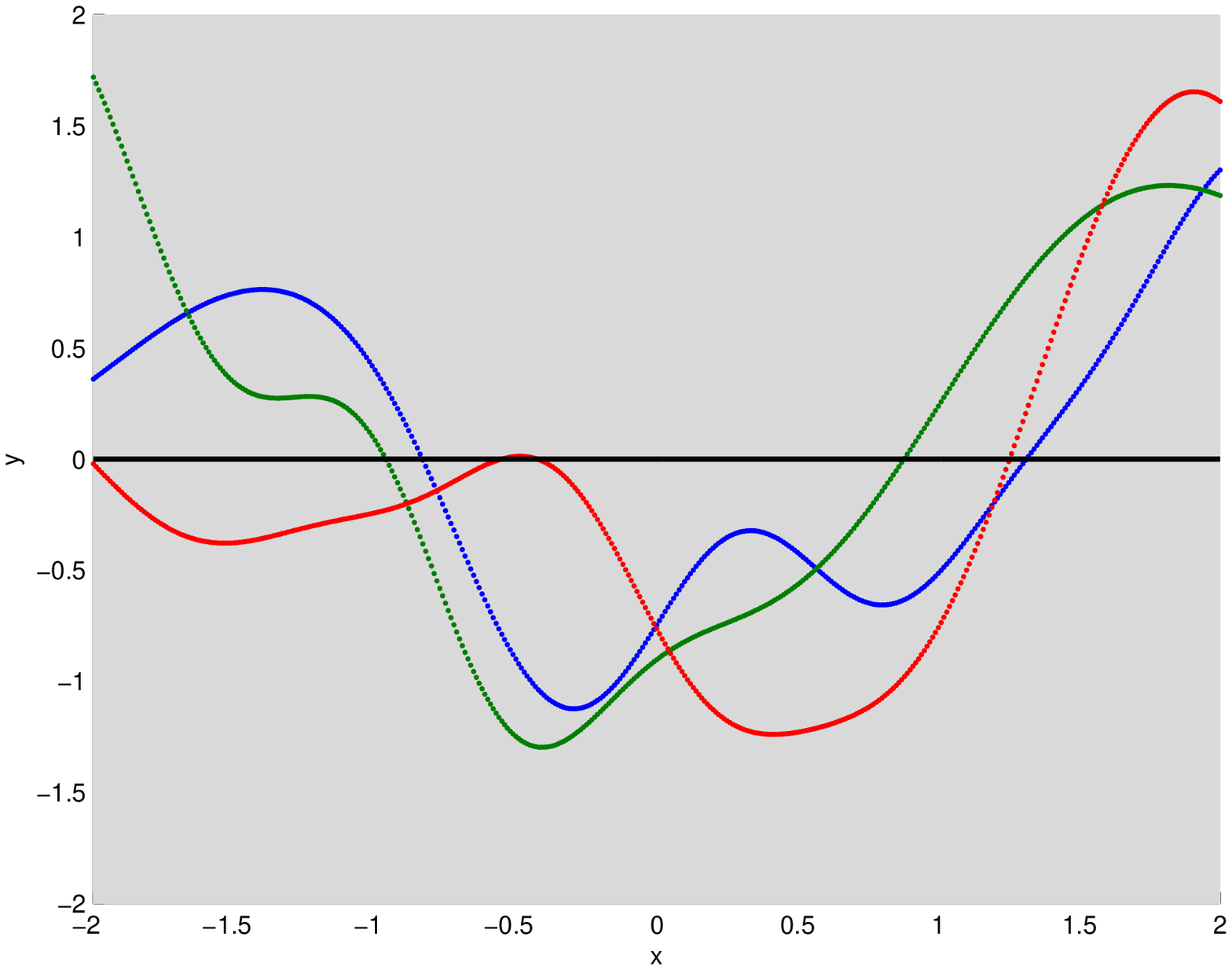}
    \caption{Before observations}
    \label{fig:gp_example_prior}
  \end{subfigure}
  ~
  \begin{subfigure}[b]{0.45\linewidth}
    \psfrag{x}[c][c]{$x$}
    \psfrag{y}[c][c]{$f(x)$}
    \includegraphics[width=\linewidth]{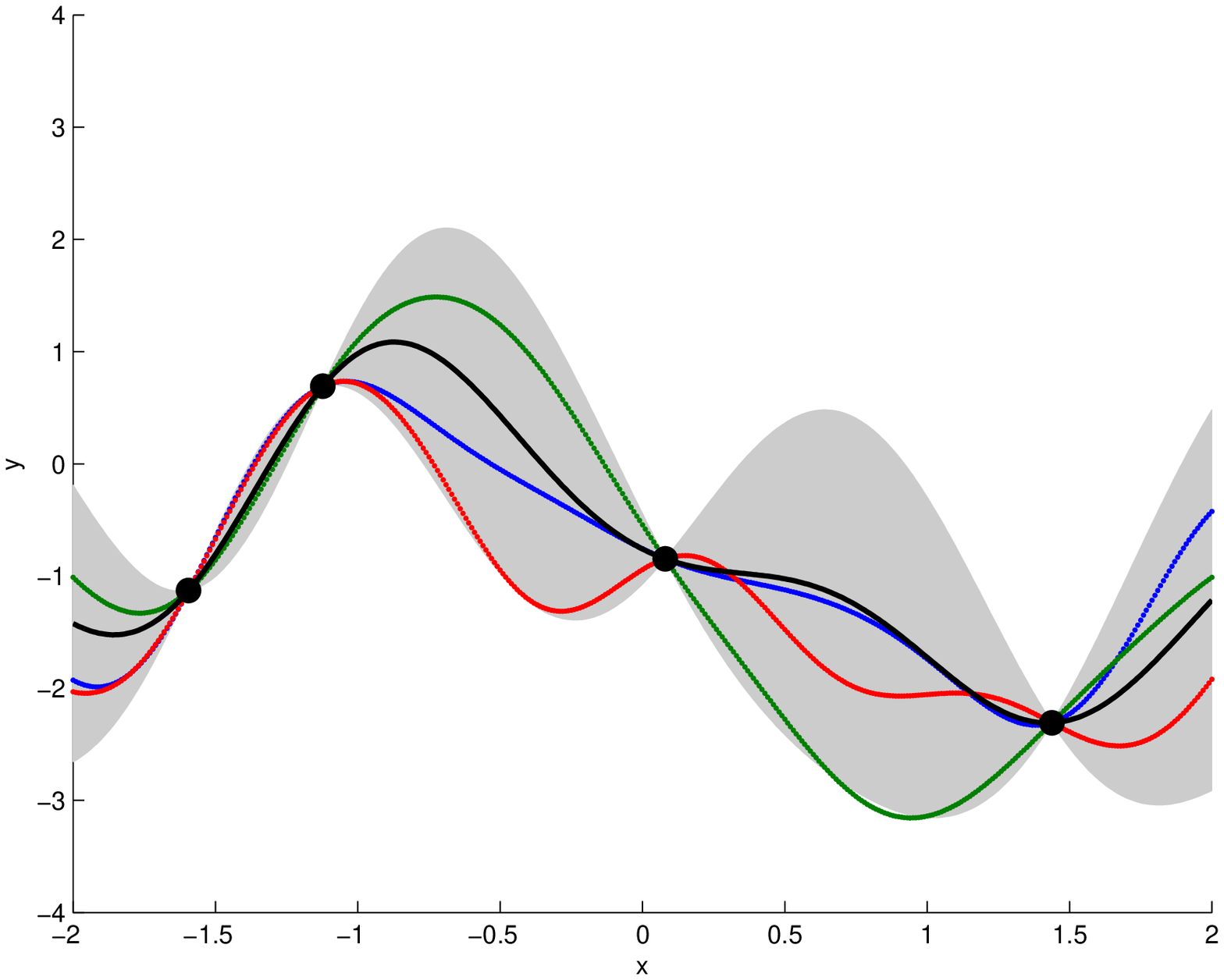}
    \caption{After observations}
    \label{fig:gp_example_posterior}
  \end{subfigure}
  \caption{Function distribution using a Gaussian process. Before the
    observations, the distribution is the same over all the space. After the
    observations, the distribution adapts to constraint the possible functions.
    The distribution mean is given by the black line and the 95\% confidence
    interval is given by the shadowed region. Three function samples are also
    provided for each case.}
  \label{fig:gp_example}
\end{figure*}
\subsection{Gaussian Process As a Function Approximation Problem}
\label{sec:approximations:gp_review}
Since the model should have enough flexibility to fit the given samples, an
appropriate choice for a surrogate function is a Gaussian process, which always
has enough capacity to fit the data. Before describing how a Gaussian process is
used to approximate the Pareto frontier, we provide the reader with an overview
of how they work. For a more detailed description, we refer the reader
to~\cite{rasmussen2006gaussian}.

A Gaussian process (GP) is a generalization of the multivariate normal
distribution to infinite dimensions and can be used to solve a regression
problem. A GP defines a probability distribution over functions, such that the
outputs are jointly normally distributed.

To better understand this concept, consider an infinite column vector $y \in
\R^\infty$ and an infinite matrix $x \in \R^{\infty\times D}$. Then a function
$f \colon \R^D \to \R$ can be described by associating the row indexes, such
that $f(x_i) = y_i$. The GP relies on the fact that the relationship between $x$
and $y$ can be written as:
\begin{equation}
  \label{eq:gaussian_process_basic}
  y \sim \N(\mu(x), K(x)),
\end{equation}
which states that all dimensions of $y$ are distributed according to a
multivariate normal distribution with mean $\mu(x)$ and covariance $K(x)$.
Moreover, the mean for a given dimension is given by $\E[y_i] = \mu(x_i)$ and
the covariance is given by $\Cov(y_i,y_j) = k(x_i, x_j)$, where $k(\cdot,\cdot)$
is a positive semi-definite kernel function.

Although continuous functions, and thus Gaussian processes, are defined for an
infinite number of points, which caused the vectors $x$ and $y$ to have infinite
dimensions, only a finite number of observations are actually made in practice.
Let $N$ be such number of observations. Then, by the marginalization property of
the multivariate normal distribution, we only have to consider $N$ observed
dimensions of $x$ and $y$. Furthermore, the finite-dimension version of $y$ is
still normally distributed according to Eq.~\eqref{eq:gaussian_process_basic}
when considering only the observed dimensions.

Usual choices for the mean and covariance functions are the null
mean~\cite{rasmussen2006gaussian}, such that $\mu(x) = 0$, and the squared
exponential kernel, defined by:
\begin{equation}
  \label{eq:rbf_kernel}
  k(x, x') = \eta^2 \exp \left(-\frac{1}{2} \sum_{i=1}^D
  \frac{(x_i - x_i')^2}{\rho_i^2}\right),
\end{equation}
where $\eta,\rho_i > 0$ and $\rho_i$ are the scale parameters, which define a
representative scale for the smoothness of the function.

The choice of the kernel function establishes the shape and smoothness of the
functions defined by the GP, with the squared exponential kernel defining
infinitely differentiable functions. Other choices of kernel are possible and
provide different compromises regarding the shape of the function being
approximated, such as faster changes and periodicity of values. However, in
order to use the monotonicity constraints introduced in
Section~\ref{sec:gp_monotonic}, the kernel has to be at least twice
differentiable, which limits the kernels that can be used.

Figure~\ref{fig:gp_example_prior} shows the prior distribution over functions
using the squared exponential kernel with $\eta = 1$, $\rho = 0.5$, $D=1$, and
the zero mean. This highlights the fact that the GP defines a distribution over
functions, not a unique function. Three sample functions from this GP are also
shown in the same figure. Note that the functions are not shown as continuous,
which would require an infinite number of points, but as finite approximations.

To use the GP to make predictions, the observed values of $x$ are split into a
training set $X$, whose output $Y$ is known, and a test set $X_*$, whose output
$Y_*$ we want to predict. Since all observations are jointly normally
distributed, we have that the posterior distribution is given by:
\begin{subequations}
\label{eq:gp_posterior_noiseless}
\begin{gather}
  Y_*|X_*,X,Y \sim \N(\mu_*, \Sigma_*)
  \\
  \mu_* = K(X_*,X) K(X,X)^{-1} Y
  \\
  \Sigma_* = K(X_*,X_*) - K(X_*,X) K(X,X)^{-1} K(X,X_*),
\end{gather}
\end{subequations}
where $K(\cdot,\cdot)$ are matrices built by computing the kernel function for
each combination of the arguments values.

The posterior distribution for the previous GP, after four observations marked
as black dots, is shown in Fig.~\ref{fig:gp_example_posterior}. Note that the
uncertainty around the observed points is reduced due to the observation
themselves, and the mean function passes over the points, as expected.
Again, three functions are sampled from the posterior, and all agree on the
value the function must assume over the observations.

In order to avoid some numerical issues and to consider noisy observations, we
can assume that the covariance has a noisy term. Assuming that $y_i = f(x_i) +
\epsilon_i$, where $\epsilon_i$ is normally distributed with zero mean and
variance $\sigma^2$, then the covariance of the observations is given by
$\Cov(y_i,y_j) = k(x_i,x_j) + \sigma^2 \delta_{ij}$. The noiseless value $l_i =
f(x_i)$ can then be estimated by:
\begin{subequations}
\label{eq:gp_posterior_noised}
\begin{gather}
  L_*|X_*,X,Y \sim \N(\mu_*, \Sigma_*)
  \\
  \mu_* = K(X_*,X) \Omega Y
  \\
  \Sigma_* = K(X_*,X_*) - K(X_*,X) \Omega K(X,X_*)
  \\
  \Omega = \left[K(X,X) + \sigma^2 I\right]^{-1},
\end{gather}
\end{subequations}
which is similar to Eq.~\eqref{eq:gp_posterior_noiseless}, except for the added
term in $\Omega$ corresponding to the noise.
\begin{figure*}[p]
  \centering
  \begin{subfigure}[b]{0.45\linewidth}
    \psfrag{x}[c][c]{$y_1$}
    \psfrag{y}[c][c]{$y_2$}
    \includegraphics[width=\linewidth]{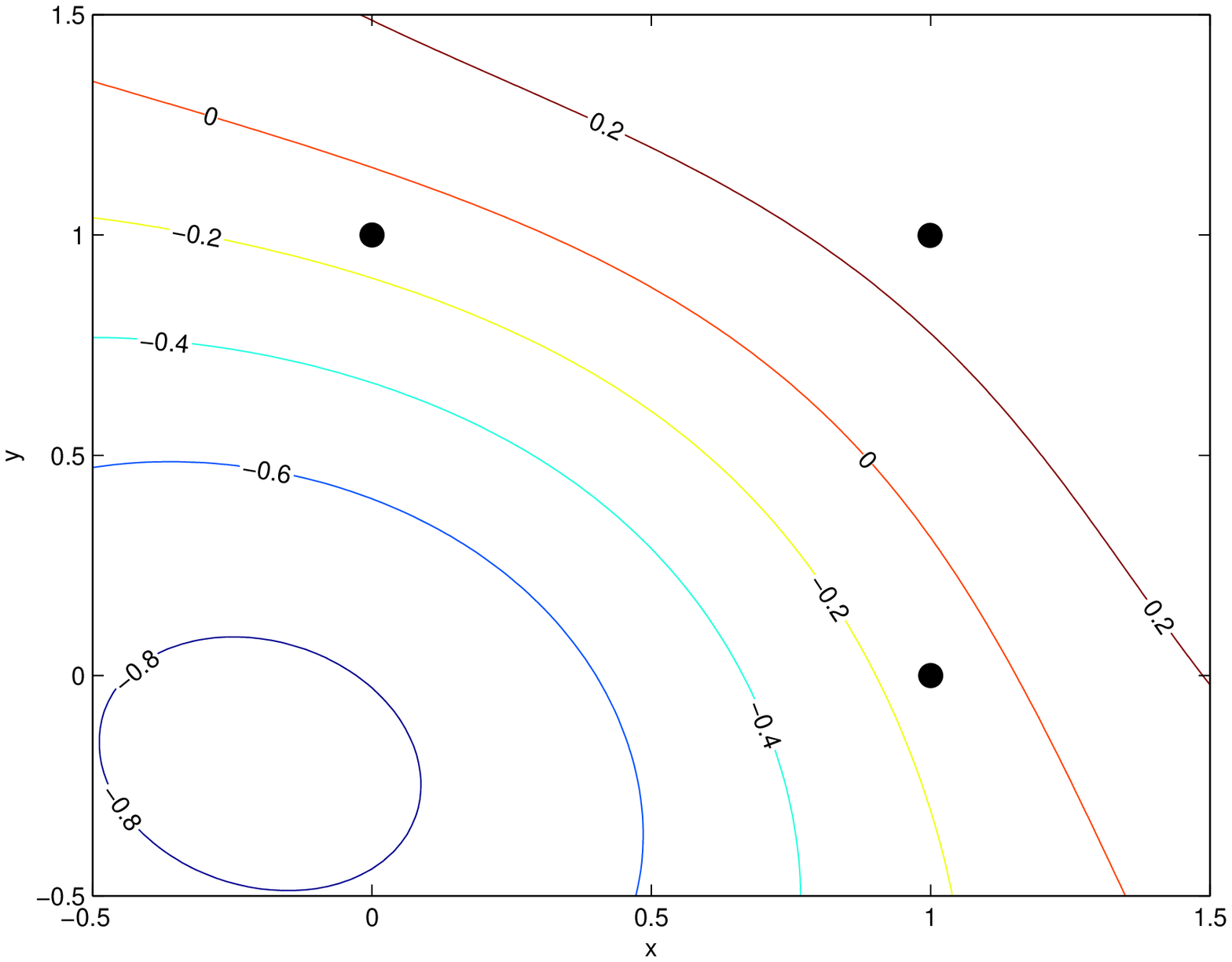}
    \caption{Concave, $\beta \to \infty$}
  \end{subfigure}
  ~
  \begin{subfigure}[b]{0.45\linewidth}
    \psfrag{x}[c][c]{$y_1$}
    \psfrag{y}[c][c]{$y_2$}
    \includegraphics[width=\linewidth]{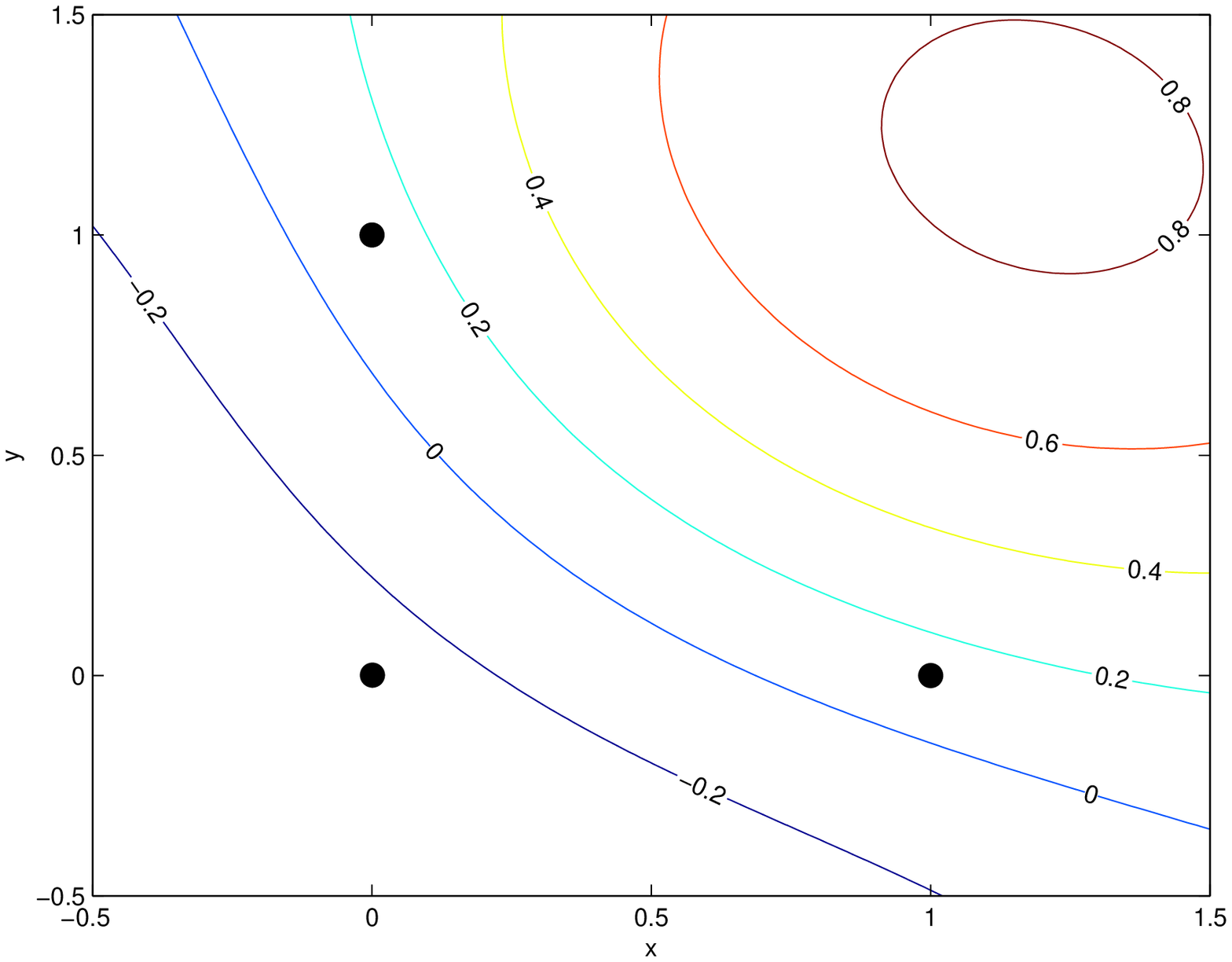}
    \caption{Convex, $\beta \to \infty$}
  \end{subfigure}
  \\
  \begin{subfigure}[b]{0.45\linewidth}
    \psfrag{x}[c][c]{$y_1$}
    \psfrag{y}[c][c]{$y_2$}
    \includegraphics[width=\linewidth]{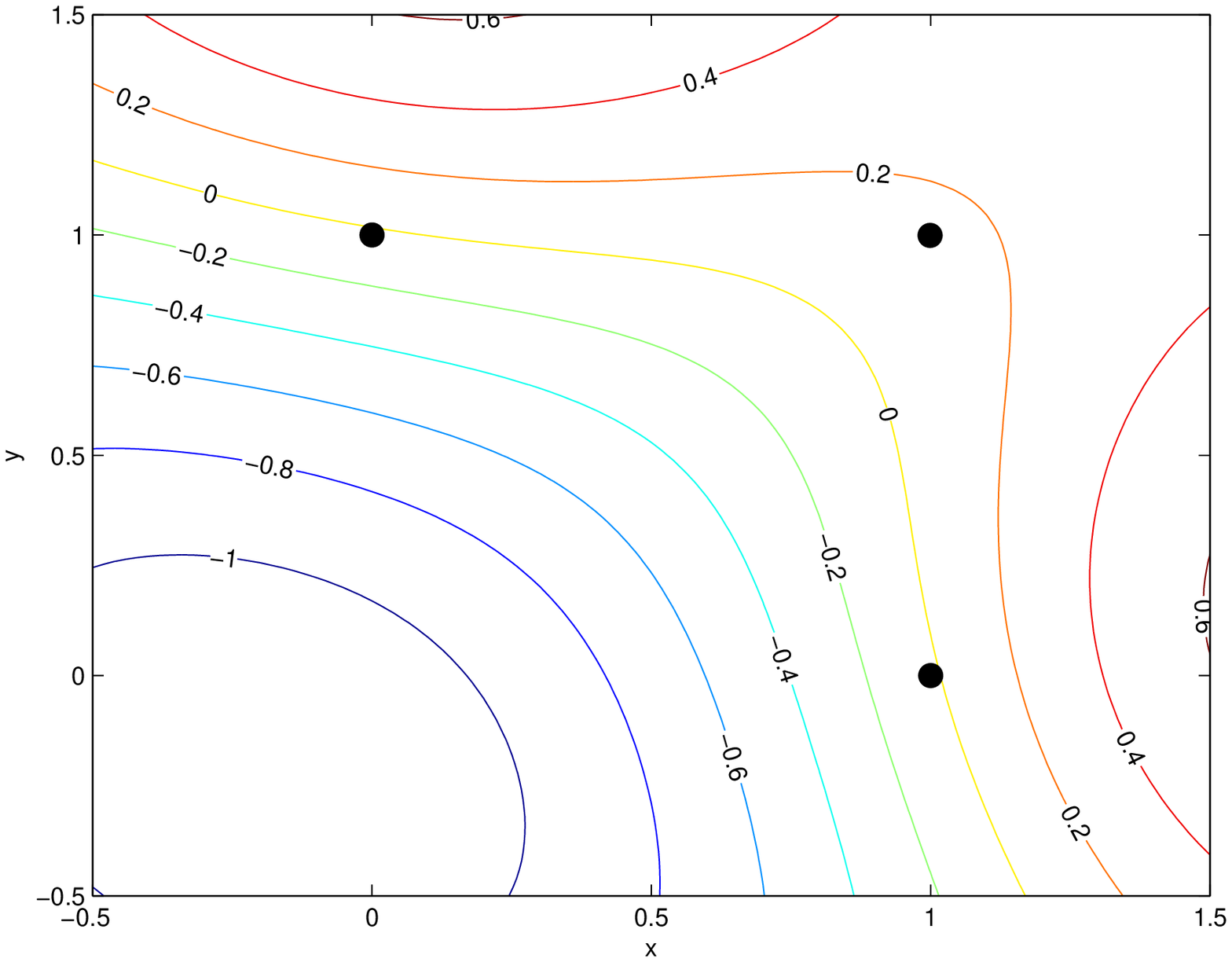}
    \caption{Concave, $\beta = 0.1$}
  \end{subfigure}
  ~
  \begin{subfigure}[b]{0.45\linewidth}
    \psfrag{x}[c][c]{$y_1$}
    \psfrag{y}[c][c]{$y_2$}
    \includegraphics[width=\linewidth]{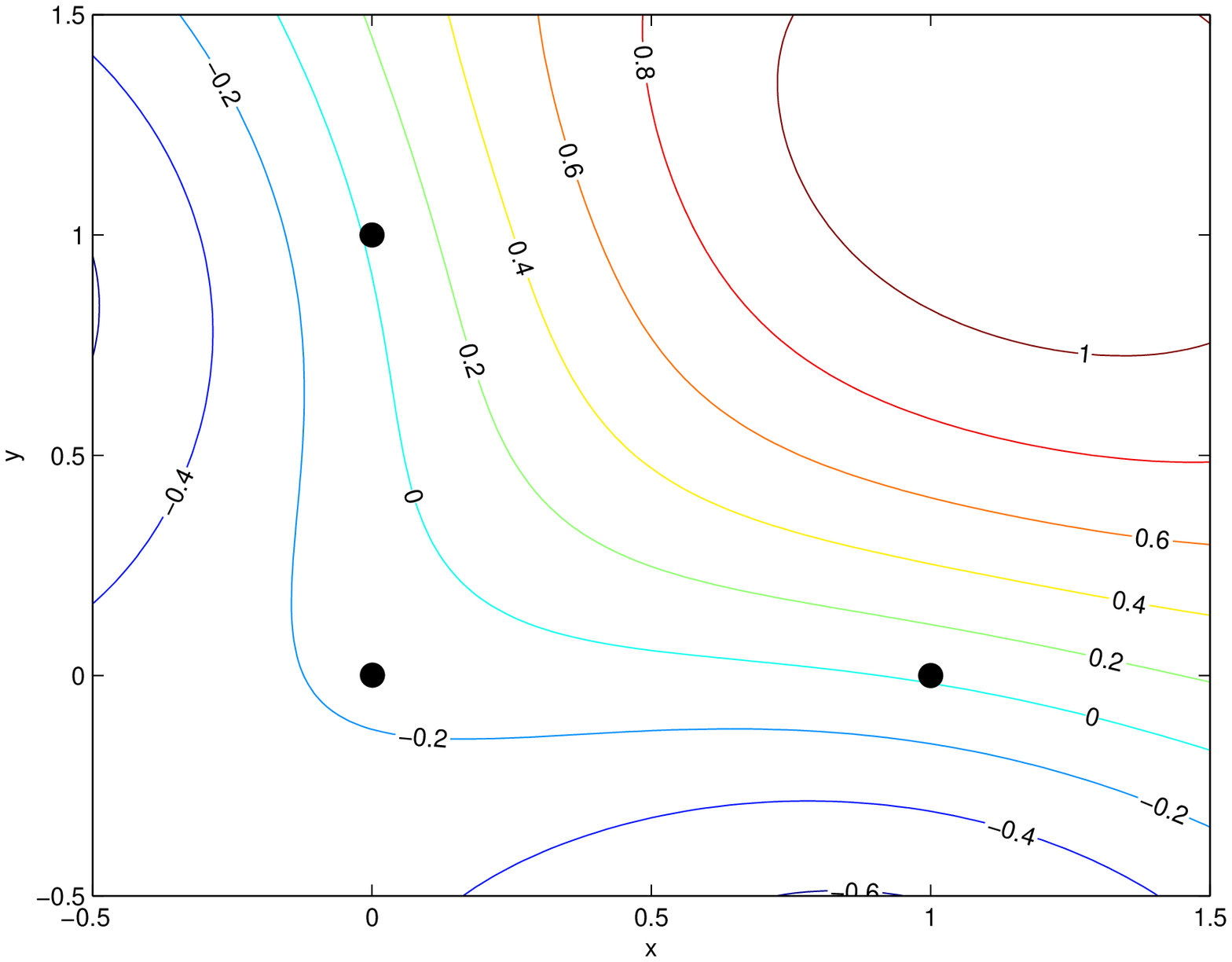}
    \caption{Convex, $\beta = 0.1$}
  \end{subfigure}
  \\
  \begin{subfigure}[b]{0.45\linewidth}
    \psfrag{x}[c][c]{$y_1$}
    \psfrag{y}[c][c]{$y_2$}
    \includegraphics[width=\linewidth]{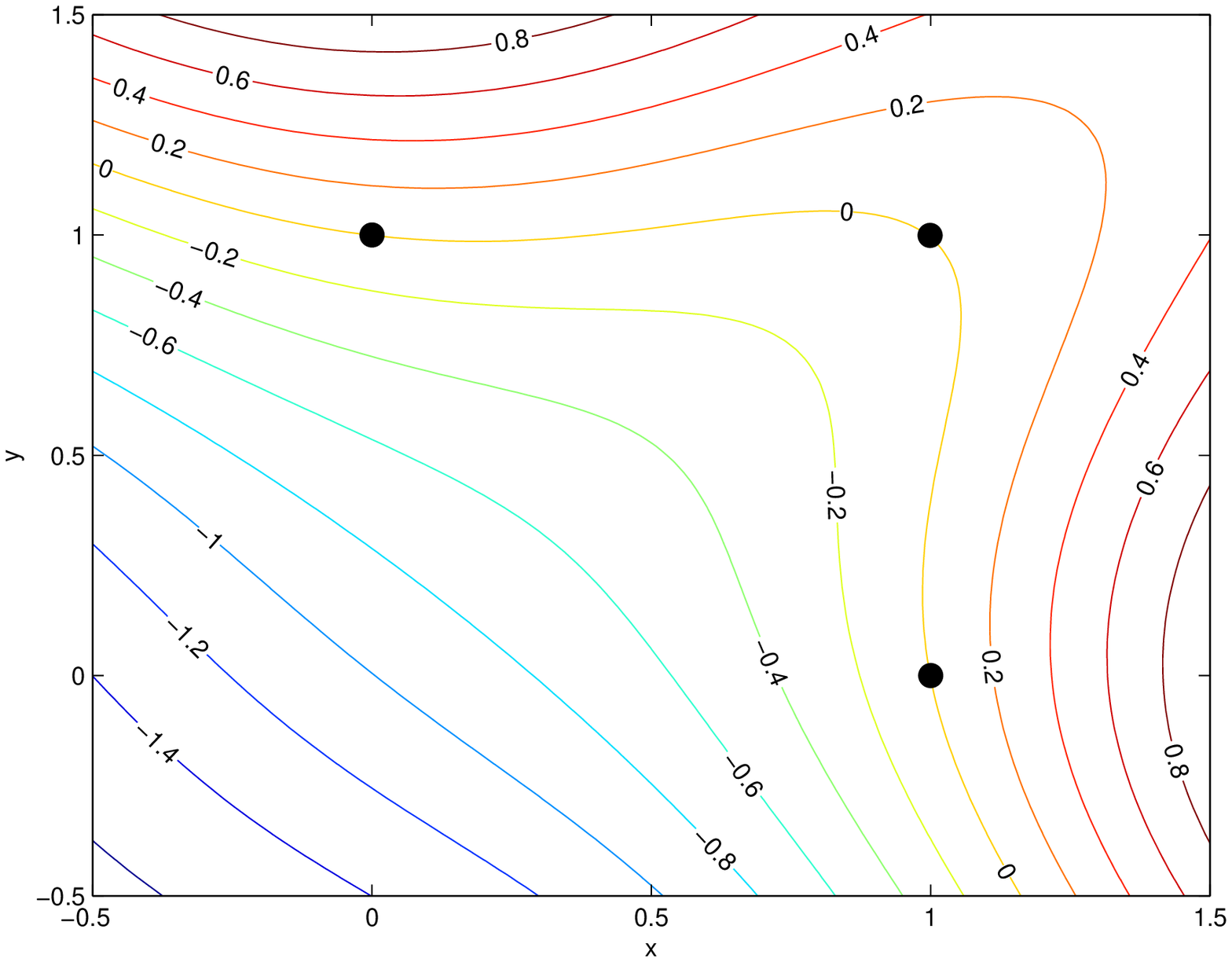}
    \caption{Concave, $\beta \to 0$}
  \end{subfigure}
  ~
  \begin{subfigure}[b]{0.45\linewidth}
    \psfrag{x}[c][c]{$y_1$}
    \psfrag{y}[c][c]{$y_2$}
    \includegraphics[width=\linewidth]{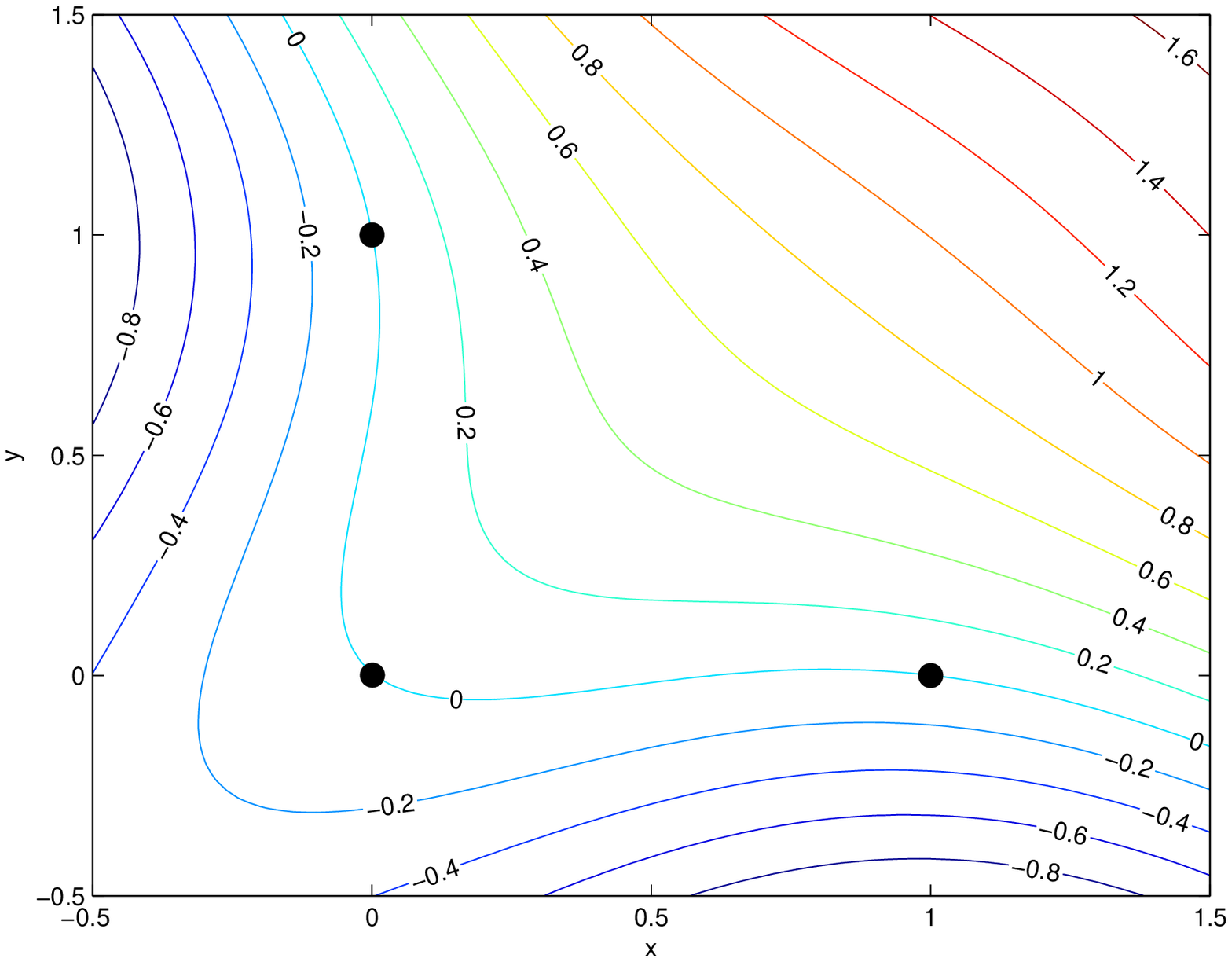}
    \caption{Convex, $\beta \to 0$}
  \end{subfigure}
  \caption{Contours for the $f(y)$ learned using a Gaussian process with
  derivative constraint. The black dots are the frontier points provided.}
  \label{fig:gp_derivative}
\end{figure*}
\begin{figure*}[t]
  \centering
  \begin{subfigure}[b]{0.45\linewidth}
    \psfrag{x}[t][t]{\small $y_1$}
    \psfrag{y}[c][c]{\small $y_2$}
    \includegraphics[width=\linewidth]{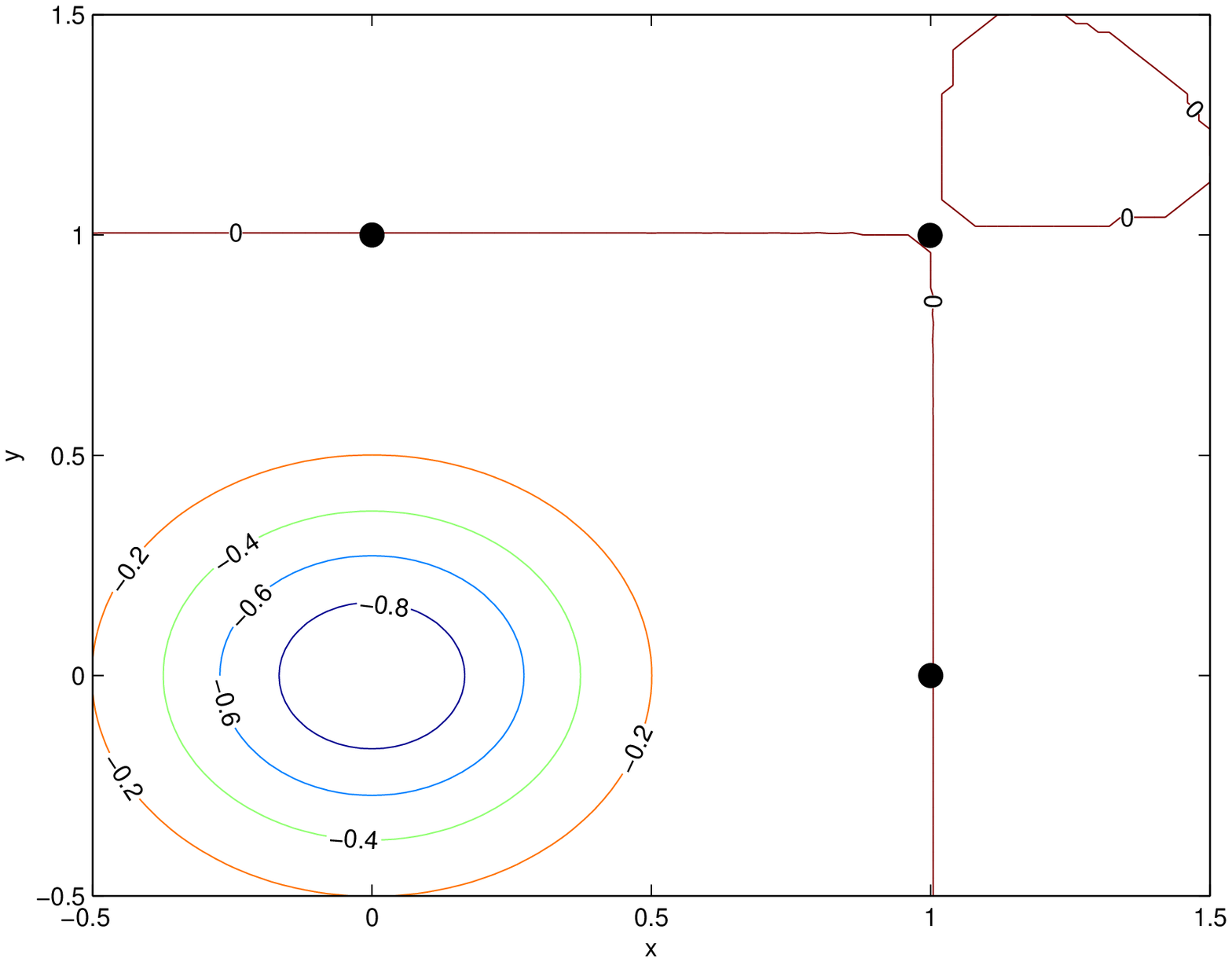}
    \caption{Concave, $\beta = 0.1$}
  \end{subfigure}
  ~
  \begin{subfigure}[b]{0.45\linewidth}
    \psfrag{x}[t][t]{\small $y_1$}
    \psfrag{y}[c][c]{\small $y_2$}
    \includegraphics[width=\linewidth]{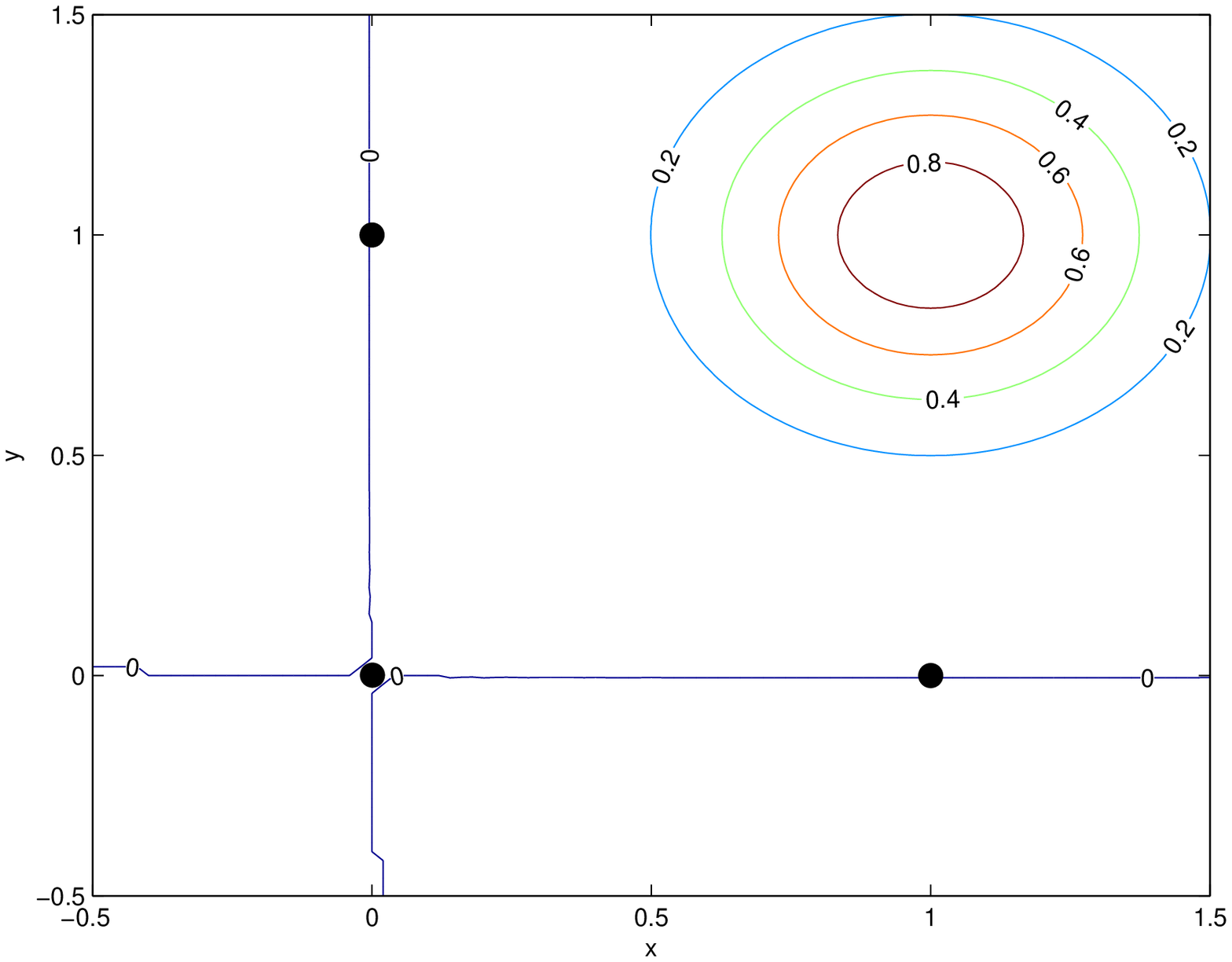}
    \caption{Convex, $\beta = 0.1$}
  \end{subfigure}
  \caption{Contours for the $f(y)$ learned using standard Gaussian process. The
  black dots are the frontier points provided.}
  \label{fig:gp}
\end{figure*}
\subsection{Gaussian Processes with Monotonicity Soft Constraint as Surrogates}
\label{sec:gp_monotonic}
Just like in the previous section, we consider the null mean function $\mu(x) =
0$ and the squared exponential kernel defined in Eq.~\eqref{eq:rbf_kernel}.
Since we are mapping from the objective space $\R^M$ to a value in $\R$,
according to Definition~\ref{def:score}, the input values are the objectives $y$
and the outputs the scores $z$.

Let $Y \in \R^{N \times M}$ be a set of $N$ input points and $Z \in \R^N$ their
desired targets for training. We define the latent variable $L$ between the two,
such that
\begin{equation*}
  L|X \sim \N(0, K(Y, Y)),
\end{equation*}
where $K(Y, Y)_{i,j} = k(y_i, y_j)$. The latent variable then produces the
observed values $Z$ through
\begin{equation*}
  Z|L \sim \N(L, \sigma^2 I),
\end{equation*}
where $I$ is the identity matrix.

This model is the same as the one described in
Section~\ref{sec:approximations:gp_review}. However, only the mean prediction
will be used in this paper to describe the estimated Pareto frontier. Moreover,
we will show how changing the allowed noise level $\sigma$  affects the Pareto
frontier approximation.

Besides the observations of $f(y)$ at the desired points, the GP framework also
accepts observations of its derivative, since differentiation is a linear
operator~\cite{o1992some,rasmussen2003gaussian}, that is, the derivative of a GP
is also a Gaussian process. However, since we do not know the desired value of
the gradient, only that it should be positive, from
Corollary~\ref{cor:differentiableIff} and Lemma~\ref{lem:monotonic_sufficiency},
forcing an arbitrary value may lead to reduced performance.

Another option is to introduce a probability distribution over the gradient in
order to favor positive values, introducing monotonicity
information~\cite{riihimaki2010gaussian}. This new distribution can be viewed
as adding constraints to the Gaussian process, making it feasible to include the
monotonicity information to the existing framework.

Ideally, the probability distribution over the gradient is the step function,
which provides a probability of zero if the gradient is negative and the same
probability for all positive gradients. However, the step function defines a
hard threshold and does not allow small errors, which can cause some problems
for the optimization. Therefore, a smooth function that approximates the step is
used to define a soft constraint over the gradient.

Let $m^{(i)}_{d_i}$ be the indication that the $i$-th sample is monotonic in the
direction $d_i$. Then the following probability distribution can be used to
approximate the step function:
\begin{subequations}
\label{eq:monotonicity}
\begin{gather}
  p\left(m^{(i)}_{d_i} \left| \frac{\partial l^{(i)}}{\partial y_{d_i}} \right.
  \right)
  =
  \Phi\left(\frac{\partial l^{(i)}}{\partial y_{d_i}} \frac{1}{\nu}\right)
  \\
  \Phi(v) = \int_{-\infty}^v \N(t|0,1) \text{d}t,
\end{gather}
\end{subequations}
where we assume the probit function $\Phi(\cdot)$ as the derivative probability.
Since the probit is a cumulative distribution function, its value ranges from
$0$ to $1$ and it is monotonically increasing, which makes it a good
approximation for the step function. The parameter $\nu$ allows us to define how
strict the distribution should be, with $\nu \to 0$ approximating the step
function or a hard constraint. In this paper, following the suggestion
of~\cite{riihimaki2010gaussian}, we use $\nu = 10^{-6}$.

Since the monotonicity probability is not normal, it has to be approximated by a
normal distribution to be used in the GP framework. To understand this, first
consider the problem without the monotonicity constraints, which is given by
Eq.~\eqref{eq:gp_posterior_noised}. The probability distribution of the
observation is given by:
\begin{equation}
  \label{eq:test_latent}
  p(L_* | X_*, X, Y) = \int p(L_* | X_*, X, L) p(L | X, Y) dL,
\end{equation}
where $L$ is the latent variable for the training data, whose probability
distribution, computed by the Bayes' rule, is
\begin{gather*}
  p(L | X, Y) = \frac{p(Y|L) p(L|X)}{p(Y|X)}
  \\
  p(Y|X) = \int p(Y|L) p(L|X) dL.
\end{gather*}
According to the model, the prior $p(L|X)$ and the likelihoods $p(Y|L)$ and
$p(L_*|X_*,X,L)$ are normal distributions, which makes all integrals tractable
and all other distributions defined in the closed form presented in
Eq.~\ref{eq:gp_posterior_noised}.

Now, considering the monotonicity constraints, let $\M$ be the monotonicity
constraints and $L'$ be the random variable associated with the derivative of
the latent variables $L$. Then the probability distribution in
Eq.~\eqref{eq:monotonicity} can be written as $p(\M|L')$. Rewriting the
posterior distribution over the latent variables, we get:
\begin{subequations}
\label{eq:train_latent}
\begin{gather}
  p(L | X, Y, \M) = \frac{p(\M|L') p(Y|L) p(L,L'|X)}{p(Y,\M | X)}
  \\
  \label{eq:train_latent_int}
  p(Y,\M | X) = \int p(\M|L') p(Y|L) p(L,L'|X) dL dL'.
\end{gather}
\end{subequations}

Because the distribution $p(\M|L')$ is not normal and every other
distribution in Eq.~\eqref{eq:train_latent} is normal, the integrals defined in
Eqs.~\eqref{eq:test_latent} and \eqref{eq:train_latent_int} are intractable.
Therefore, the distribution $p(\M|L')$ must be approximated by a normal
distribution, which can be achieved using the expectation propagation
algorithm~\cite{minka2001expectation}, with the update equations described
in~\cite{riihimaki2010gaussian}. The expectation propagation algorithm
iteratively adjusts an unnormalized normal distribution to locally approximate
the distribution defined by the soft constraints, such that $p(\M|L') \approx
\tilde Z \mathcal N(L' | \tilde \mu, \tilde \Sigma)$, where $\tilde Z$ is a
normalization constant, $\tilde \mu$ is a mean vector with one value for each
monotonicity constraint, and $\tilde \Sigma$ is a diagonal covariance matrix.

Besides this monotonicity constraint, we also would like that the errors between
the provided values for the points $z$ and their latent values $l$ are small, so
that the estimated shape of the Pareto frontier is closer to the true one. This
can be achieved by placing a prior inverse-gamma distribution over $\sigma^2$,
whose density is given by:
\begin{equation*}
  p(x; \alpha, \beta) = \frac{\beta^\alpha}{\Gamma(\alpha)} x^{-\alpha-1} \exp
  \left(-\frac{\beta}{x}\right),
\end{equation*}
where $\Gamma(\cdot)$ is the gamma function. As $\beta \to \infty$, this prior
is ignored, while $\beta \to 0$ indicates that there is no noise. In the results
shown, we fix $\alpha = 3$ and vary $\beta$.

We define $f(y)$ as the final expected value $E[l^*|y^*, Z, Y, \theta]$, and the
parameters $\theta$ are optimized to maximize the full likelihood, including
gradient probability and $\sigma^2$ prior, of the training data $Y$ and $Z$. We
also add the monotonicity constraint on all training data for all directions,
but it should be noted that we can also add only monotonicity constraint at a
point without defining its desired value. This allows us to find points that
have $f(y) = 0$ but negative gradient and add the constraint on them, which in
turn could improve the estimation.

To test the GP's performance as a surrogate, we consider the two test frontiers
whose samples are given by $P_1 = [(0, 1), (\epsilon, \epsilon), (1,0)]$, which
is a convex frontier, and $P_2 = [(0, 1), (1-\epsilon, 1-\epsilon), (1,0)]$,
which is a concave frontier, both with $\epsilon = 10^{-3}$. Note that the
points were purposely selected to test the ability to model very sharp
frontiers. However, using only the points defined by $P_1$ and $P_2$ leads to a
solution where $f(y)$ is almost $0$ everywhere. To avoid this problem, we add a
point $(1,1)$, with target value $1$, to $P_1$ and a point $(0,0)$, with target
value $-1$, to $P_2$. The parameters for the Gaussian process are found using
gradient ascent in the samples likelihood.

Figure~\ref{fig:gp_derivative} shows the resulting curves for different values
of $\beta$. The first thing we notice is that, although $\beta \to \infty$ does
not place any restriction on $\sigma$, which allows the observed points in the
frontier to be far from their latent values that actually define the frontier,
the resulting curve is still able to fit the general shape defined by the points
provided.

As we reduce the value of $\beta$, the observed variance $\sigma^2$ is required
to be smaller and the frontier shape gets better and better. Ideally, with
$\beta=0$, the latent points would be the same as the observed points, but
this causes numeric problems due to the monotonicity information and can make it
harder to satisfy the monotonicity constraint, due to the smoothness of the GP.

When we reduce the value to $\beta = 0.01$ and beyond, the resulting frontier
is not valid anymore, with noticeable points with negative derivative. However,
the largest difference in the concave problem is between points $(0.82, 1.055)$
and $(0.2, 0.985)$, with a total reduction in $y_2$ of just $0.07$, and a
similar result is obtained for the convex case. Therefore, this approximation is
still close to the correct frontier and could be used to evaluate proposed
solutions because it was built with the theoretical developments of this paper
in mind and tries to approximate them, which most likely provides better
frontier estimates than methods that use traditional regression solutions, such
as~\cite{moo_svm,loshchilov2010mono,loshchilov2010dominance}, where the manifold
$f(y) = 0$ can have any shape.

\begin{figure*}[t]
  \centering
  \begin{subfigure}[b]{0.4\linewidth}
    \psfrag{f1}[t][t]{\small $f_1$}
    \psfrag{f2}[c][c]{\small $f_2$}
    \includegraphics[width=\linewidth]{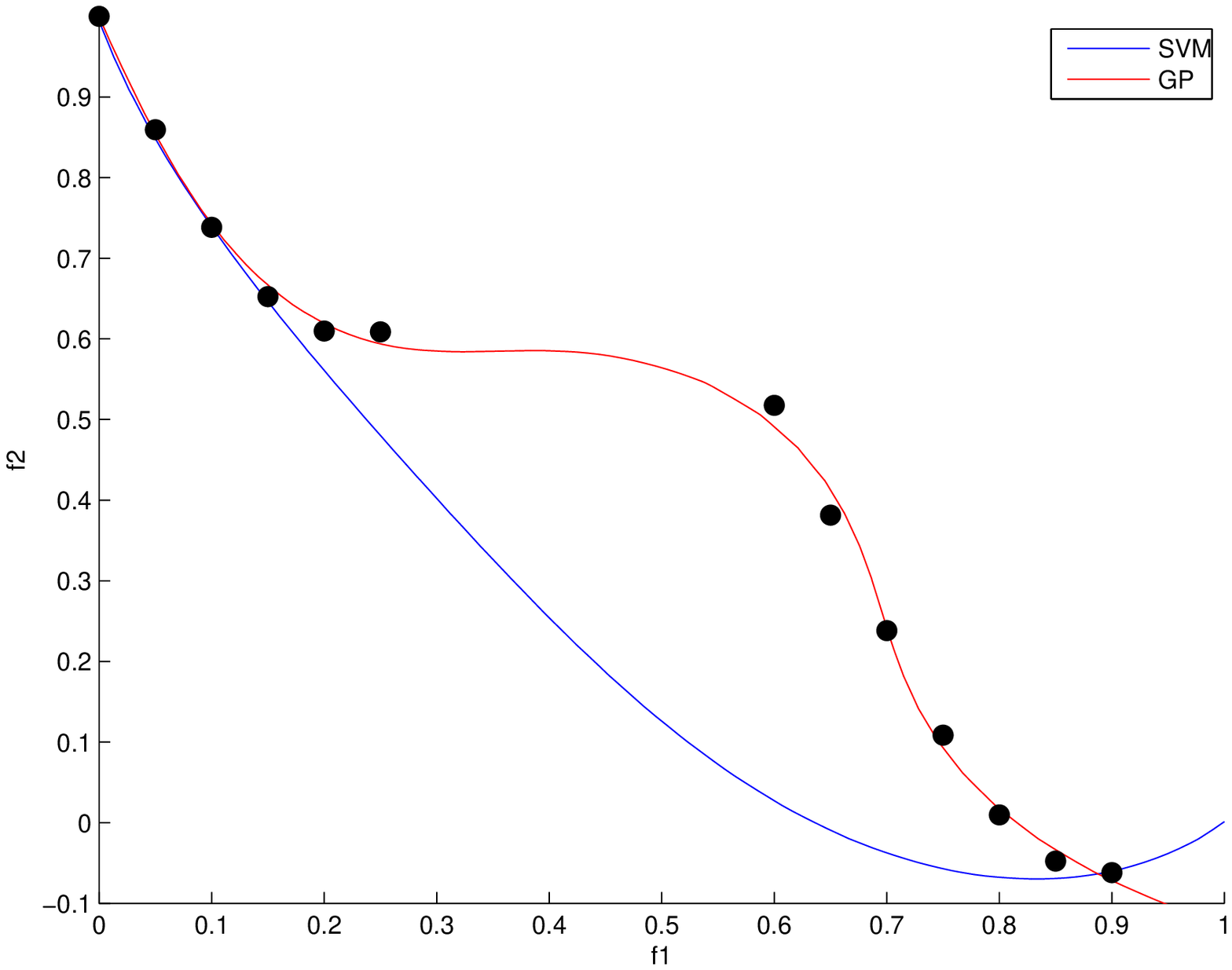}
    \caption{$\gamma = 1$}
  \end{subfigure}
  ~
  \begin{subfigure}[b]{0.4\linewidth}
    \psfrag{f1}[t][t]{\small $f_1$}
    \psfrag{f2}[c][c]{\small $f_2$}
    \includegraphics[width=\linewidth]{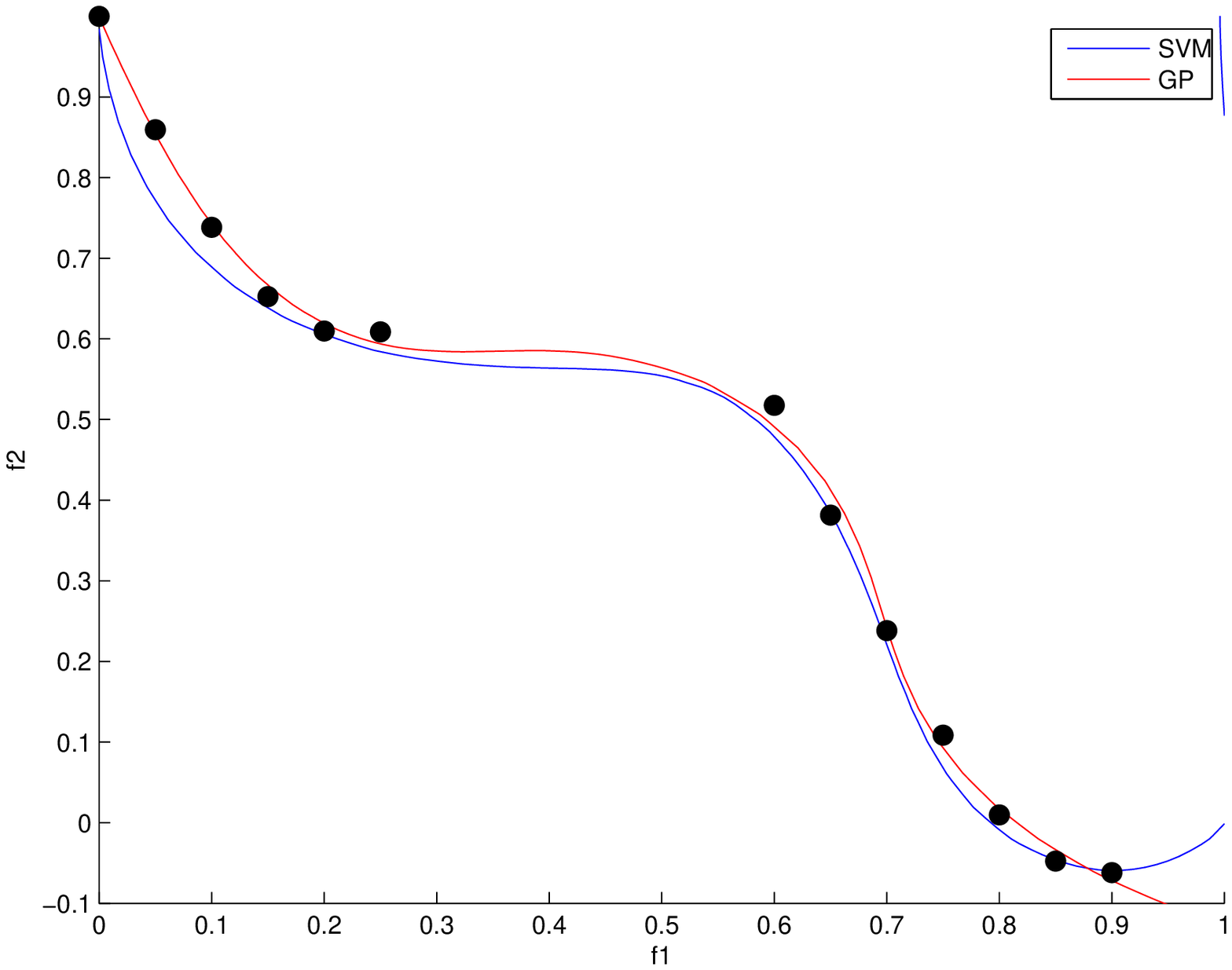}
    \caption{$\gamma = 5$}
    \label{fig:svm_comparison:gamma5}
  \end{subfigure}
  \\
  \begin{subfigure}[b]{0.4\linewidth}
    \psfrag{f1}[t][t]{\small $f_1$}
    \psfrag{f2}[c][c]{\small $f_2$}
    \includegraphics[width=\linewidth]{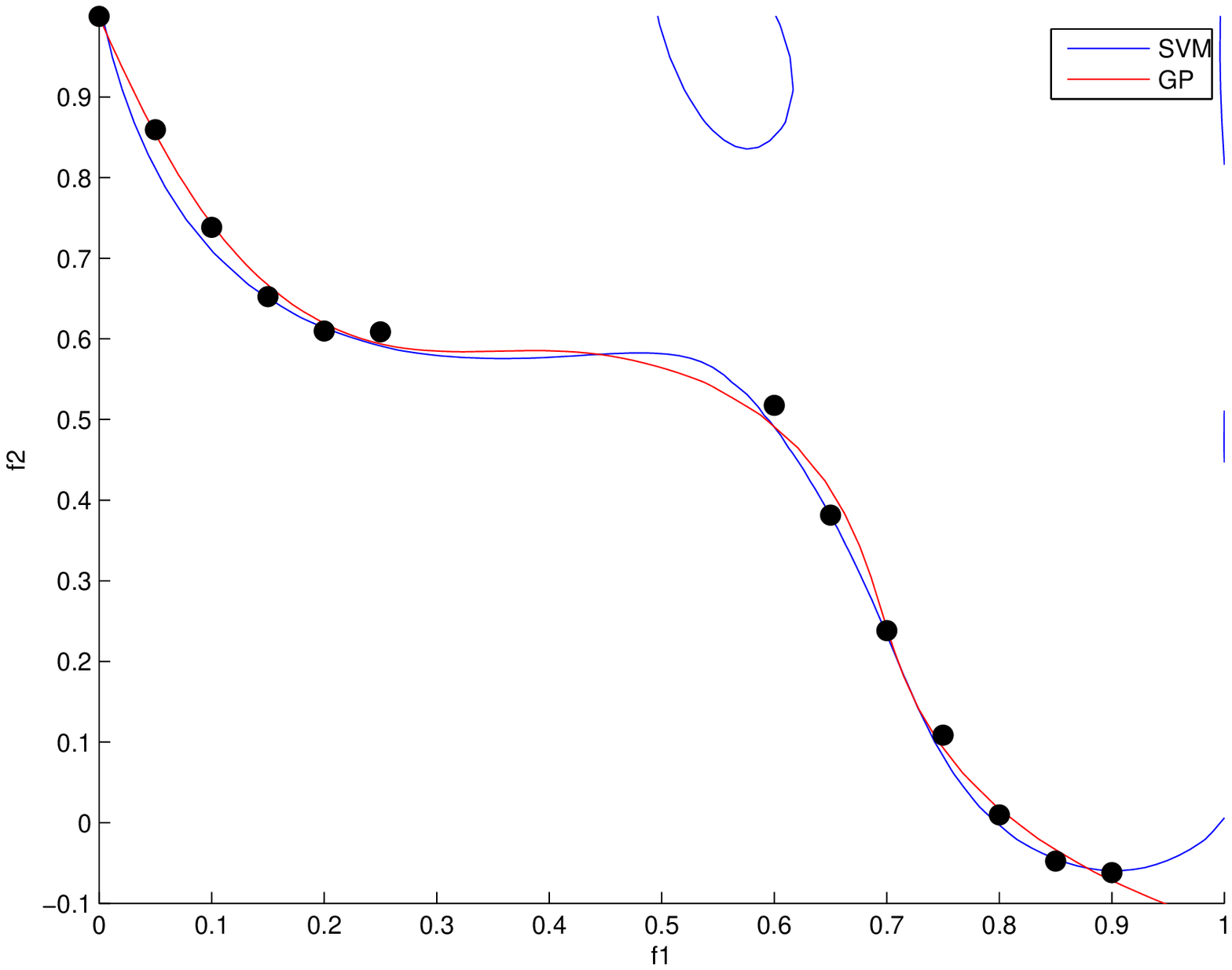}
    \caption{$\gamma = 6$}
    \label{fig:svm_comparison:gamma6}
  \end{subfigure}
  ~
  \begin{subfigure}[b]{0.4\linewidth}
    \psfrag{f1}[t][t]{\small $f_1$}
    \psfrag{f2}[c][c]{\small $f_2$}
    \includegraphics[width=\linewidth]{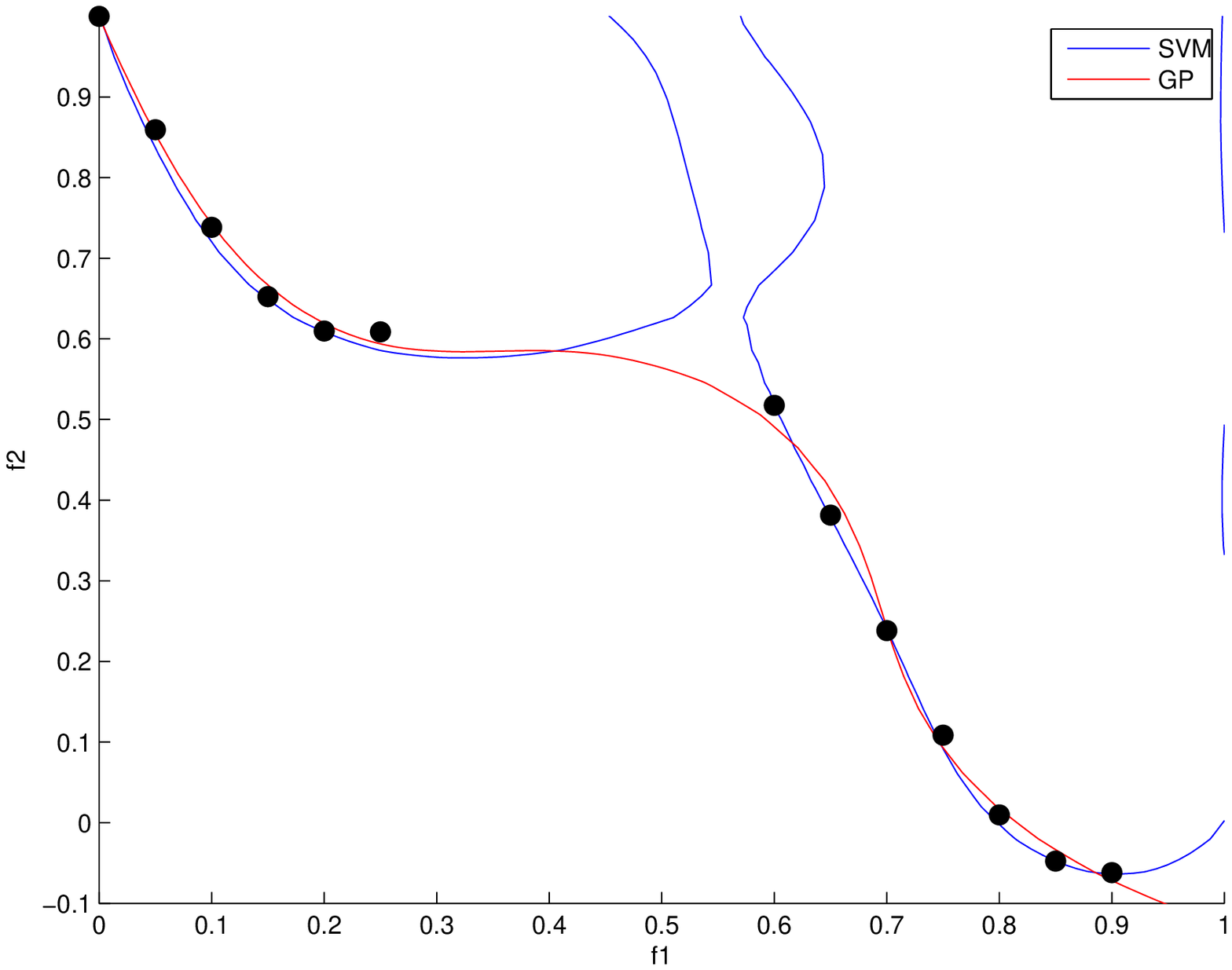}
    \caption{$\gamma = 8$}
  \end{subfigure}
  \caption{Estimated frontiers using SVM with different values of $\gamma$ and
  Gaussian process. The points in the data set that belong to the true Pareto
  frontier are shown as dots.}
  \label{fig:svm_comparison}
\end{figure*}

To evaluate the effect of using the gradient constraint, Fig.~\ref{fig:gp} shows
a similar GP but without any information on the gradients. Although the expected
Pareto frontier is correctly identified, there are also many points that do not
belong to the frontier and where $f(y) = 0$. Since the unconstrained GP had
better frontier estimates for the extreme points than the constrained GP, as all
points between them and the knee satisfy the conditions, it appears that
not every point benefits from the gradient constraint.

Even though both GP models failed to fully satisfy the theoretical conditions,
we consider that the GP with derivative restriction performed better, both
because there are some parameter sets that are able to satisfy the frontier
conditions and because it does not violate the restrictions as much. Moreover,
if the variance, which is not shown but is higher for points far from the inputs
provided, is taken into account, then the violations of the GP with derivatives
occur in a region with higher uncertainty than the violations of the pure GP.

Therefore, despite the minor violations of the GP with derivative constraints,
this approximation is still close to the correct frontier and could be used to
evaluate the proposed solutions.
\subsection{Comparison to Existing SVM Surrogate}
The surrogate method introduced in \cite{moo_svm}, like the method proposed in
this paper, is based on approximating the frontier directly from values in the
objective space. This makes it a good candidate for comparison and validating
the conjecture that existing methods may arbitrarily violate the conditions
described in this paper.

The one-class SVM used in \cite{moo_svm} is defined by the following
optimization problem:
\begin{align*}
  \min_{w, \xi_i, \rho} ~& \frac{\|w\|^2}{2} + \frac{1}{\nu N} \sum_{i=1}^N
  \xi_i - \rho
  \\
  \text{s.t.} ~& w^T \phi(x_i) \ge \rho - \xi_i
  \\
  & \xi_i \ge 0, i \in \{1,\ldots,N\},
\end{align*}
where $v \in (0,1]$ and the feature-extraction function $\phi(x)$ is defined
implicitly by the kernel
\begin{equation*}
  K(x,y) = \exp(-\gamma \|x-y\|^2),
\end{equation*}
which is similar to the kernel used for the GP.

One important difference between training an SVM and a GP is that the GP has a
natural way to optimize its hyper-parameters by maximizing the data likelihood,
which automatically defines a trade-off between fitting the data and model
complexity. For the SVM, we must use cross-validation \cite{bishop2006pattern},
which reduces the number of points available to fit the model, since the data
must be divided in the training and validation sets.

To compare the surrogate methods, we use one test problem from
\cite{deb2001multi}, which is also used in \cite{moo_svm} to show the behavior
of the proposed SVM surrogate. The problem is given by:
\begin{align*}
  \min ~& f_1(x_1,x_2) = x_1
  \\
  \min ~& f_2(x_1,x_2) = 1+x_2^2 - x1 - 0.2 \sin(3 \pi x_1)
  \\
  \text{s.t.} ~&
  x_1 \in [0,1],~ x_2 \in [-2,2].
\end{align*}
We chose this problem because its true Pareto frontier is discontinuous, which
creates sharp changes in its associated estimated Pareto frontier, just like in
Fig.~\ref{fig:pareto_frontier_example}, and makes it harder to approximate.

We chose $\nu = 10^{-3}$ so that the samples provided should be almost perfectly
classified and we constraint the scales $\rho_i$ in Eq.~\ref{eq:rbf_kernel} to
be equal, so that both methods can use the same features from the samples. The
data set provided is composed of a grid with step $0.05$ for both variables,
which includes some points in the Pareto frontier. The full grid is used to fit
the SVM because it provided better results than using just the non-dominated
points, while only the non-dominated points are required for the GP.

Figure~\ref{fig:svm_comparison} shows the resulting approximations of the Pareto
frontier using a GP with parameters learnt through gradient ascent in the data
likelihood, like in Section~\ref{sec:gp_monotonic}, and an SVM with different
values of $\gamma$. The GP learns an appropriate shape from the samples provided
despite the discontinuity in the frontier, but also slightly violates the
constraints during the gap in $f_1 \in [0.3, 0.5]$. Moreover, in the absence of
any information about the shape in the interval $f_1 \in (0.9,1]$, because no
point was provided there, the GP extrapolates a valid shape for the Pareto
frontier.

The SVM is highly dependent on the parameter $\gamma$. When it is small, the
shape learnt is very conservative and does not follow the shape defined by the
points in the frontier. On the other hand, when it is large, the surrogate fits
the points in the frontier better but also may define a function that violates
greatly the conditions to be a valid Pareto frontier. The best value for
$\gamma$ that does not violate the constraints in the interval $f_1 \in [0,
0.9]$ is $\gamma = 5$. However, for this value the GP provides a better
approximation of the Pareto frontier, as shown in
Fig.~\ref{fig:svm_comparison:gamma5}. Increasing $\gamma$ provides a better
approximation, achieving a quality comparable to the GP, but also creates
regions that violate the conditions to be a valid Pareto frontier more than the
GP. Furthermore, $\gamma = 6$ defines a region that the SVM believes is part of
the Pareto frontier but actually is very distant from it and inside the
dominated region, as shown in Fig.~\ref{fig:svm_comparison:gamma6}.

Besides these issues, the SVM also does not extrapolate well to the region $f_1
\in (0.9,1]$. Close inspection shows that the dominated region defined by the
SVM is finite, that is, it is described by a region in the objective space that
is surrounded by an infinite region that the SVM believes is not dominated. This
behavior shows that the learnt model carries no concept of the problem it is
solving, which is to approximate a Pareto frontier, but describes a generic
function approximation. The results in Fig.~\ref{fig:svm_comparison} provide
evidence for the conjecture that existing methods proposed in the literature may
arbitrarily violate the conditions described in this paper.

Furthermore, if only the points at the Pareto frontier were provided for
learning, then the region defined by the SVM would enclose only these points and
would ignore the dominated region. Thus the SVM method requires data in the
dominated region while the GP method only requires the points at the frontier.

\section{Conclusion}
\label{sec:conclusion}
In this paper, we have introduced the necessary and sufficient conditions that
functions must satisfy so that their solution space describes an estimated
Pareto frontier. These conditions follow from the definition of an estimated
Pareto frontier and are extended for differentiable functions, which allows
easier verification of the conditions.

Based on these conditions, a Gaussian process (GP) was tested on toy problems
with very sharp Pareto frontiers. The GP was extended to include the theoretical
conditions as soft probabilistic constraints and a regularization term was added
to avoid large deviations between the points and their latent values. The mean
latent value is used as surrogate for the Pareto frontier, and some values of
the regularization constant allowed a correct frontier estimate to be found.

However, when the regularization becomes too strong, the surrogate violates the
constraints that define a valid estimated Pareto frontier on some points, but
this occurs far from the given inputs and the deviation is small. This suggests
that, even under these conditions, the proposed function could be used to
provide insight on the shape of the true Pareto frontier, and possibly provide
more realistic estimates than other methods that do not take the restrictions
into consideration during their design.

To validate this hypothesis and the conjecture that existing surrogate methods
may violate the conditions described in this paper, we compared the proposed GP
with a one-class SVM used in~\cite{moo_svm} on one of the test problems
described in the same paper. We showed that the GP again violates the
constraints by small values and provide a good estimate for the Pareto frontier,
while the SVM defined a worse estimate or violated the conditions more than the
GP. Furthermore, the dominated region defined by the SVM is bounded by what it
represents as the non-dominated region, while the GP correctly divides the space
in two infinite areas.

Besides being a better surrogate for the Pareto frontier, the GP has the data
likelihood as an innate measure that can be used to optimize its
hyper-parameters and only requires data at the frontier. On the other hand, the
SVM must use some method, like cross-validation~\cite{bishop2006pattern}, to
optimize its hyper-parameters and it requires data in the dominated region to
define better approximations.

We highlight that, although GP were used together with the theory on this paper
to approximate the Pareto frontier, the theory is general and does not depend on
the specific choice of the function descriptor. Therefore, other models that are
able to deal with the constraints imposed by the theory, in either a soft or
hard way, should be able to learn the desired shape of the Pareto frontier too.
Nonetheless, we are not aware of any other method to create the score function
in which the constraints are as easy to include as in the GP. Additionally, a GP
provides robustness to changing the number of points used in the estimation.

Further investigations involve studying the behavior of the GP to approximate
the Pareto frontier with real benchmarks and using some multi-objective
optimization algorithm, such as NSGA-II~\cite{deb2002fast}, to provide the
points. Since the objectives tend to be smoother than in the example frontier
provided~\cite{huband2006review}, we expect the estimated Pareto frontier
described by a GP to fit the true Pareto frontier even better in these problems.
If this is the case, we will investigate the possibility of integrating the
frontier surrogate with other surrogate models for the objectives, so that all
of them are learned directly and the number of function evaluations could be
reduced.

Moreover, since the only requirement for the surrogate is that the Pareto
frontier is approximated by the null space and the exact value on other parts of
the objective space are not relevant, the GP could be used to fit a regression
model on the individuals of a population where the target value is monotonically
increasing in the objective space. Standard performance measures in
multi-objective optimization, such as the class in non-dominated
sorting~\cite{deb2002fast} and the dominance count~\cite{beume2007sms}, satisfy
this property and can be used as targets of the regression. In this case, the GP
would not only define the Pareto frontier, but would also define a measure of
the distance between a given point and the approximated Pareto frontier.

Another interesting line of research is to evaluate when the derivative
constraints on the points provided is beneficial, since in some points it avoids
incorrect association of other points with the frontier, like around the knee in
the unconstrained GP shown in this paper, and in others it may make the
estimated shape not satisfy the constraints, like the points in the constrained
GP also shown in this paper. This could not only provide better fit, but may
also increase the fitting speed, since less constraints need to be evaluated,
which reduces the size of the GP and the number of expectation propagation steps
required. Therefore an iterative algorithm that adds the constraints as needed
should be pursued.

\section*{Acknowledgment}
The authors would like to thank CNPq for the financial support.

\bibliographystyle{templates/IEEEtran/bib/IEEEtran}
\bibliography{paper}

\vspace*{-2\baselineskip}

\begin{IEEEbiographynophoto}{Conrado S. Miranda}
  received his M.S. degree in Mechanical Engineering and his B.S. in Control
  and Automation Engineering from the University of Campinas (Unicamp), Brazil,
  in 2014 and 2011, respectively. He is currently a Ph.D. student at the School
  of Electrical and Computer Engineering, Unicamp. His main research interests
  are machine learning, multi-objective optimization, neural networks, and
  statistical models.
\end{IEEEbiographynophoto}

\begin{IEEEbiography}[{\includegraphics[width=1in,height=1.25in,clip,keepaspectratio]{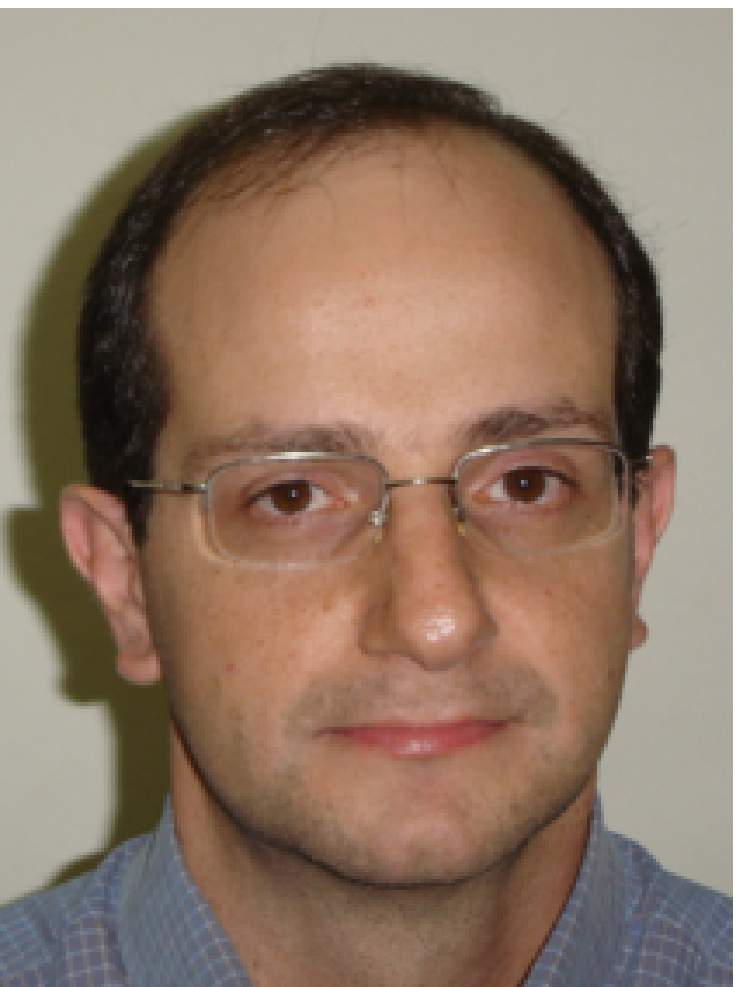}}]{Fernando J. Von Zuben}
  received his Dr.E.E. degree from the University of Campinas (Unicamp),
  Campinas, SP, Brazil, in 1996. He is currently the header of the Laboratory of
  Bioinformatics and Bioinspired Computing (LBiC), and a Full Professor at the
  Department of Computer Engineering and Industrial Automation, School of
  Electrical and Computer Engineering, University of Campinas (Unicamp). The
  main topics of his research are computational intelligence, natural computing,
  multivariate data analysis, and machine learning. He coordinates open-ended
  research projects in these topics, tackling real-world problems in the areas
  of information technology, decision-making, pattern recognition, and discrete
  and continuous optimization.
\end{IEEEbiography}

\vfill

\end{document}